\documentclass[11pt]{article}

\usepackage[margin=1in]{geometry}
\usepackage{setspace}
\usepackage{amsmath,amsfonts,bm}

\def\eqref#1{equation~\ref{#1}}

\def\1{\bm{1}}

\DeclareMathAlphabet{\mathsfit}{\encodingdefault}{\sfdefault}{m}{sl}
\SetMathAlphabet{\mathsfit}{bold}{\encodingdefault}{\sfdefault}{bx}{n}

\newcommand{\lr}{\alpha}

\usepackage{hyperref}
\usepackage{url}
\usepackage{algorithm}
\usepackage{algorithmic}
\usepackage{amsmath}
\usepackage{amsthm}
\usepackage{amsfonts}
\usepackage{enumitem}
\usepackage{subfigure}
\usepackage{graphicx}
\usepackage{appendix}
\usepackage{booktabs}
\usepackage{multirow}
\usepackage{multicol}
\usepackage{authblk} 
\usepackage{times}   

\theoremstyle{plain}

\newtheorem{proposition}{Proposition}

\newtheorem{corollary}{Corollary}

\theoremstyle{definition}
\newtheorem{definition}{Definition}
\newtheorem{assumption}{Assumption}

\title{CAFE: Causally-Guided Automated Feature Engineering with Multi-Agent Reinforcement Learning}

\author[]{Arun Vignesh Malarkkan \thanks{Corresponding author: \texttt{arun.malarkkan@asu.edu}}}
\author[]{Wangyang Ying}
\author[]{Yanjie Fu}
\affil[]{School of Computing, Arizona State University}

\date{}

\begin{document}

\maketitle

\begin{abstract}

Automated feature engineering (AFE) enables AI systems to autonomously construct high-utility representations from raw tabular data. However, existing AFE methods rely on statistical heuristics, yielding brittle features that fail under distribution shift.
We introduce \textbf{CAFE}, a framework that reformulates AFE as a causally-guided sequential decision process, bridging causal discovery with reinforcement learning-driven feature construction. 
\emph{Phase I} learns a sparse directed acyclic graph over features and the target to obtain \emph{soft causal priors}, grouping features as direct, indirect, or other based on their causal influence with respect to the target. 
\emph{Phase II} uses a cascading multi-agent deep Q-learning architecture to select causal groups and transformation operators, with hierarchical reward shaping and causal group-level exploration strategies that favor causally plausible transformations while controlling feature complexity.
Across $15$ public benchmarks (classification with macro-F1; regression with inverse relative absolute error), CAFE achieves up to $7\%$ improvement over strong AFE baselines, reduces episodes-to-convergence, and delivers competitive time-to-target. Under controlled covariate shifts, CAFE reduces performance drop by $\sim4\times$ relative to a non-causal multi-agent baseline, and produces more compact feature sets with more stable post-hoc attributions.
These findings underscore that causal structure, used as a soft inductive prior rather than a rigid constraint, can substantially improve the robustness and efficiency of automated feature engineering.
%

\end{abstract}

\vspace{-0.3cm}
\section{Introduction}
In high-stakes domains such as chemical process optimization, industrial manufacturing, drug formulation, and energy systems, AI systems must operate under evolving conditions where input distributions shift. Automated feature engineering (AFE) promises to extract useful representations from raw tabular data with minimal manual effort. Yet, most AFE methods rely on statistical correlations, yielding brittle features that fail once distributions change, limiting use in safety-critical settings.
For instance, in chemical formulation, a 2°C temperature variation can alter product yield by $15$--$30\%$, but correlation-based AFE often attributes outcomes to spurious variables rather than causal drivers. As process conditions evolve, such features lose predictive power despite stable causal mechanisms.
This highlights a fundamental limitation: conventional feature engineering overlooks causality, producing models vulnerable to interventions or operational shifts. We study \textbf{causally-guided AFE}: automatically transforming high-dimensional process data into features that preserve causal relations with target outcomes. This is key for generalization, transparency, and computational tractability.

Yet, causally-guided AFE introduces several core technical challenges: 
1) \textbf{Causal discovery under noise and confounding:} Real-world observational data often includes latent confounders and noise, making reliable causal structure learning difficult without strong assumptions. 
2) \textbf{Combinatorial search explosion:} The space of candidate feature transformations scales exponentially with dimensionality, rendering greedy exploration computationally challenging. 
3) \textbf{Structured exploration under uncertainty:} Navigating the exponential transformation space requires principled search strategies that favor causal plausibility over spurious correlations.
4) \textbf{Robustness:} Features should preserve utility under covariate shifts and mechanism-preserving changes.

The existing methods only partially address these challenges. Most traditional AFE methods rely on greedy exploration and utility-based pruning strategies and can be brittle under shifts~\cite{7344858,horn2019autofeat}.
Reinforcement learning (RL) methods~\cite{chen2019neural, wang2022group, liu2021automated, malarkkan2025delta} improve exploration but suffer from sparse rewards and ignore causal structure.
Recent LLM-based methods~\cite{gong2025evolutionary, zhang2024dynamic, abhyankar2025llm} treat feature generation as sequence modeling but lack principled causal guidance. 
While causal feature selection methods~\cite{tsamardinos2003algorithms, yu2020causality, wang2023data, yao2023causal} identify causally relevant features, they do not synthesize transformed features.
This gap motivates a framework that integrates causality with principled feature generation.

\textbf{Our Insights: A Causally-Guided RL Perspective for Feature Engineering.} AFE must move beyond statistical correlations to leverage causal mechanisms that remain stable across operational changes.
To meet this end, we integrate insights from causal discovery and reinforcement learning: 
First, causal graphs summarize structural dependencies, providing invariant principled guidance for which variables to transform and how to combine them.
Second, RL methods enable adaptive navigation of large, discrete transformation spaces.
We operationalize these foundations through three key insights:
1) \textit{Soft Causal Inductive Bias}: Instead of enforcing rigid causal constraints, the causal structure provides soft inductive priors to guide feature transformation decisions, maintaining flexibility under causal uncertainty;
2) \textit{Causal-aware RL Exploration}: Clustering features by causal roles (direct, indirect, or unrelated to the target), enabling reduced search complexity while preserving causal interpretability;
3) \textit{Causally Shaped Rewards}: Reward functions that balance predictive utility with causal consistency and transformation diversity, ensuring intervention stability.

\textbf{Summary of Proposed Solution:} Building on these insights, we introduce \textbf{CAFE}, a two-phase \underline{C}ausally-guided \underline{A}utomated \underline{F}eature \underline{E}ngineering framework. 
In \textbf{Phase I}, we apply sparsity-regularized causal discovery to construct a causal graph over the feature space, categorizing features as direct causes, indirect causes (via multi-hop paths), or non-causal with respect to the target, providing soft inductive guidance for feature transformation policy while maintaining flexibility under causal uncertainty.
In \textbf{Phase II}, we deploy a cascading multi-agent deep Q-learning architecture where three specialized agents make sequential decisions: selecting causal feature clusters, choosing transformation operators, and constructing causally-informed feature interactions. The agents use hierarchical reward shaping and an adaptive exploration strategy to favor causally plausible transformations while controlling complexity.
Together, these phases unify causal reasoning with reinforcement learning to improve robustness and compactness in high-stakes applications.

\textbf{Our Contributions:} 
1) We formulate AFE as a causally-guided sequential decision process, moving beyond correlation-based heuristics to leverage stable causal mechanisms;
2) We introduce three core principles-soft causal inductive bias, causal structure-aware exploration, and causally shaped reward function, integrating causal discovery with adaptive feature construction through novel multi-agent coordination; 
3) We develop \textbf{CAFE}, a two-phase AFE framework combining causal graph discovery with cascading multi-agent reinforcement learning that strategically constructs causally-informed feature transformations. 
4) We provide extensive empirical validation of CAFE across 15 benchmark datasets, demonstrating consistent improvements up to $7\%$ in predictive performance, reduces episodes-to-convergence, and exhibits $\sim 4\times$ smaller degradation under controlled covariate shifts relative to a non-causal multi-agent baseline.

\vspace{-0.35cm}

\section{Problem Statement}
\vspace{-0.3cm}
Let $\mathcal{D} = \{(\mathbf{x}_i, y_i)\}_{i=1}^N$ be a dataset with $N$ instances, where each $\mathbf{x}_i \in \mathbb{R}^K$ is a $K$-dimensional feature vector and $y_i \in \mathcal{Y}$ is the corresponding target. Let $\mathcal{F}$ denote the original feature set, and $\mathcal{O}$ a predefined set of transformation operators (e.g., arithmetic, interactions; unary or binary with standard domain guards).
Applying transformations from $\mathcal{O}$ to subsets of $\mathcal{F}$ induces a set of candidate features $\mathcal{F}^g$. The goal of AFE is to construct an optimal feature set $\mathcal{F}^* \subseteq \mathcal{F} \cup \mathcal{F}^g$ that maximizes a task-specific performance metric $V_A$ for a given ML task $A$ (e.g., classification, regression).
We formulate the objective as:
\begin{equation}
\label{objective}
\mathcal{F}^* = \arg\max_{\mathcal{S} \subseteq \mathcal{F} \cup \mathcal{F}^g} V_A(\mathcal{S}, y)
\end{equation}
where $\mathcal{S}$ denotes a candidate feature subset and $V_A$ quantifies model performance on task $A$.
The central challenge lies in efficiently navigating the exponentially large transformation space to identify $\mathcal{F}^*$ that achieves: (i) high predictive utility, (ii) robustness under distribution shift, and (iii) computational tractability in exploring the exponential feature space, guided by causality.

\vspace{-0.3cm}

\section{Methodology}
\vspace{-0.2cm}
Automated feature engineering under distribution shift must move beyond correlation-based heuristics to exploit stable causal mechanisms. CAFE addresses this via a two-phase framework that couples causal discovery with multi-agent reinforcement learning (Fig.~\ref{fig:model-architecture}). \emph{Phase I} learns a causal graph to group features by relation to the target, providing soft inductive guidance rather than rigid constraints. \emph{Phase II} uses a factorized multi-agent policy to explore the exponential transformation space, steering by causal priors while remaining flexible under causal uncertainty. Causal guidance enters through probabilistic selection and reward shaping; only safety/validity checks (e.g., guarded division) are enforced as hard constraints. Together, these phases yield compact, transparent feature sets that retain predictive performance under mechanism-preserving covariate shifts. By using the learned causal structure as a soft inductive prior, we transform the AFE problem from a broad, undirected search into a more strategic and efficient exploration, justifying the initial computational overhead of causal discovery with faster convergence to high-quality feature transformations.

\begin{figure}[t]
  \centering
  \includegraphics[width=\linewidth]{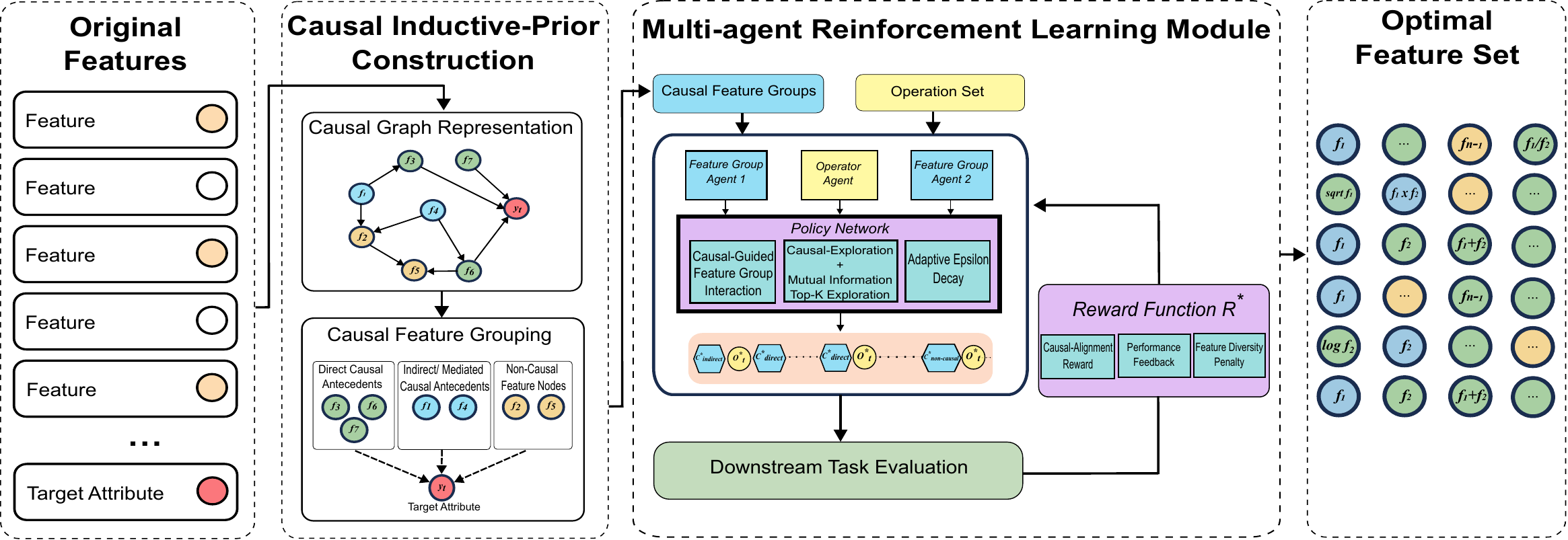}
  \vspace{-0.2cm}
  \caption{\textbf{CAFE Framework Overview.} Phase I learns a causal graph and derives soft causal priors that group features by their relation to the target: direct causes, indirect causes, and non-causal features. Phase II employs cascaded multi-agent reinforcement learning with causally shaped rewards and adaptive exploration strategies, balancing causal coherence with statistical discovery.}
  \label{fig:model-architecture}
\end{figure}

\vspace{-0.2cm}
\subsection{Theoretical Foundation: Why Causal Structure Guides Robust Feature Engineering}

\textbf{Core principle.} In a structural causal model (SCM) with assignments $X_j := f_j(\mathrm{PA}_j,\epsilon_j)$, the conditional mechanisms $P(X_j\mid\mathrm{PA}_j)$ remain invariant across environments even when marginals $P(X_j)$ shift~\cite{peters2016causal}. This invariance suggests that features constructed from variables that are causally relevant to $Y$ are more likely to preserve their predictive relation to $Y$ under mechanism-preserving shifts than features relying on spurious correlations.

\textbf{Formal setting.} Consider a structural causal model (SCM) with variables $\mathbf{X} = (X_1, \ldots, X_d)$ and target $Y$, where each variable follows $X_j := f_j(\text{PA}_j, \epsilon_j)$ with parents $\text{PA}_j$ and noise $\epsilon_j$. A mechanism-preserving shift transforms the distribution from $P(\mathbf{X})$ to $P'(\mathbf{X})$ while preserving conditional distributions $P(X_j\mid\text{PA}_j) = P'(X_j\mid\text{PA}_j)$.

\begin{proposition}[Informal invariance for causally sufficient summaries]\label{prop:causal-stability}
Let $S\subseteq\mathcal{F}$ contain a subset $S^\star$ that suffices for predicting $Y$ across environments (e.g., $S^\star \supseteq \mathrm{PA}_Y$ and the shift is mechanism-preserving). If a transformation $\phi$ preserves the information in $S^\star$ (e.g., is injective on $S^\star$ or is a sufficient statistic for $S^\star$), then
\[
P(Y \mid \phi(S)) \;=\; P'(Y \mid \phi(S)).
\]
If $S$ excludes any such invariant set, no general invariance guarantee is available.
\end{proposition}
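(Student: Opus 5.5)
The plan is to make the informal hypotheses precise and then reduce the claim to two standard facts. The first is that conditioning on a sufficient statistic does not change the conditional law of $Y$. The second is invariance of $P(Y\mid \mathrm{PA}_Y)$ under mechanism-preserving shifts.

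\textbf{Formalizing the hypotheses.} I will read ``$S^\star$ suffices for predicting $Y$ across environments'' as two conditions:
\begin{equation}
Y \perp\!\!\!\perp S\setminus S^\star \mid S^\star \ \text{ under both } P \text{ and } P', \qquad P(Y\mid S^\star)=P'(Y\mid S^\star).
\end{equation}
The second condition follows from the first in the canonical case $S^\star\supseteq \mathrm{PA}_Y$. There, $S^\star$ contains no descendants of $Y$ and $S^\star\setminus\mathrm{PA}_Y$ is d-separated from $Y$ given $\mathrm{PA}_Y$. The local Markov property then gives $P(Y\mid S^\star)=P(Y\mid \mathrm{PA}_Y)$, and the mechanism-preservation assumption applied to $Y$'s own assignment gives $P(Y\mid\mathrm{PA}_Y)=P'(Y\mid\mathrm{PA}_Y)$. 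Together these yield $P(Y\mid S)=P(Y\mid S^\star)=P'(Y\mid S^\star)=P'(Y\mid S)$.

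\textbf{Passing to $\phi(S)$.} I will read ``$\phi$ preserves the information in $S^\star$'' to mean that $\phi(S)$ determines a statistic $T$ satisfying $P(Y\mid S)=P(Y\mid T)$ under both environments. For injective $\phi$ this is immediate: $\sigma(\phi(S))=\sigma(S)$ up to null sets, so $P(Y\mid\phi(S))=P(Y\mid S)$. For a sufficient statistic, $Y\perp\!\!\!\perp S\mid \phi(S)$, so the tower property gives
\begin{equation}
P(Y\mid\phi(S))=\mathbb{E}\bigl[P(Y\mid S)\mid \phi(S)\bigr]=P(Y\mid S).
\end{equation}
Here I use that $P(Y\mid S)$ is itself $\phi(S)$-measurable, which is exactly the sufficiency statement. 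The identical argument under $P'$ gives $P'(Y\mid\phi(S))=P'(Y\mid S)$. Chaining the equalities yields $P(Y\mid\phi(S))=P'(Y\mid\phi(S))$ on the common support. I will need to assume overlap of supports, so that the conditionals are defined $P'$-a.e.

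\textbf{Main obstacle.} The delicate step is the sufficiency case. I must ensure that $\phi$ is sufficient in both environments simultaneously. In general this can fail, because sufficiency of $\phi(S)$ for $Y$ depends on the joint law, and the marginal of $S$ changes. I will first try to make $\phi$'s sufficiency depend only on the invariant kernel. The idea is to require $\phi(S)$ to determine the function $s\mapsto P(Y\mid S^\star=s^\star)$, which is the same function in both environments. Then $P(Y\mid S)=g(\phi(S))$ for a fixed $g$ under both $P$ and $P'$, and both conditionals equal $g(\phi(S))$. This also takes care of the requirement that $\phi$ not depend on the environment.

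\textbf{Negative statement.} For the final sentence, I will give a counterexample. Take $X\to Y\to Z$ with $S=\{Z\}$ and a shift in $P(X)$. Even though $P(Z\mid Y)$ is fixed, $P(Y\mid Z)$ changes through Bayes' rule because $P(Y)$ changes. This shows that no general guarantee holds when $S$ excludes every invariant set.
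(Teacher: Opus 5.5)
Your overall route is the paper's route. The paper gives only a sketch, attached to the appendix version of the claim, that invokes the causal Markov property plus invariance of the conditional mechanisms. Your handling of $\phi$ via the tower property, with $P(Y\mid S)=g(\phi(S))$ for a single environment-free $g$ and overlapping supports, is a correct and more careful version of that sketch.

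One step fails as written, in the canonical case. You assert that $S^\star\supseteq\mathrm{PA}_Y$ implies ``$S^\star$ contains no descendants of $Y$,'' and this does not follow. The local Markov property screens $Y$ off, given $\mathrm{PA}_Y$, only from non-descendants. Concretely, take $X\to Y\to Z\leftarrow W$ with $W$ a root, $S=S^\star=\{X,Z\}$, and a shift of $P(W)$ that fixes every non-root kernel. Then $S^\star\supseteq\mathrm{PA}_Y$ and your first condition holds trivially. But since $W$ is independent of $(X,Y)$,
\[
P(Y\mid X,Z)\;\propto\;P(Y\mid X)\sum_w P(Z\mid Y,w)\,P(w),
\]
which differs from $P(Y\mid X)$ and moves with $P(W)$ (e.g.\ $Z=YW$ with binary $Y,W$). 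So neither $P(Y\mid S^\star)=P(Y\mid\mathrm{PA}_Y)$ nor the invariance conclusion holds.

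The fix must be made explicit, and there are two options.
\begin{enumerate}
\item Assume that $S$ consists of non-descendants of $Y$. This must hold for all of $S$, not only $S^\star$; otherwise your first condition already fails for $S^\star=\mathrm{PA}_Y$, as in $X\to Y\to Z$ with $S=\{X,Z\}$. With this assumption, weak union applied to the local Markov property gives your first condition in both environments, and the local Markov property together with $Y$'s invariant kernel gives the second.
\item Invoke the paper's appendix SCM assumption, where $\mathrm{PA}_j\subset\mathbf{X}\setminus\{X_j\}$ for every feature, so $Y$ is a sink.
\end{enumerate}
If you take the second option, your counterexample for the negative sentence, $X\to Y\to Z$, falls outside that setting. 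Use one with $Y$ a sink instead, e.g.\ $W\leftarrow X\to Y$ with $S=\{W\}$ and a shift of $P(X)$. There $P(Y\mid W)=\sum_x P(Y\mid x)\,P(x\mid W)$ changes because $P(x\mid W)\propto P(W\mid x)P(x)$ does.

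Finally, you were right to use only the invariance of $Y$'s own kernel. The paper's stability assumption, read literally as preserving every $P(X_j\mid\mathrm{PA}_j)$ including root marginals, would force $P'=P$ and make the statement vacuous. That is also why shifting a root marginal, as in the counterexamples above, is the appropriate reading of a mechanism-preserving shift.
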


\noindent\emph{Discussion.} Proposition~\ref{prop:causal-stability} restates a standard invariance intuition~\cite{peters2016causal}: conditioning (directly or via sufficient transforms) on observed causal parents of $Y$ stabilizes the conditional across mechanism-preserving environments. We give precise assumptions and proofs for idealized settings in Appendix~A, and enumerate violations (latent confounding, selection bias, mechanism changes) that break invariance.

\textbf{Empirical regularity.} When causal discovery yields a reasonable approximation to the true relations, features built from variables with stronger (estimated) causal connection to $Y$ empirically exhibit greater stability under mechanism-preserving shifts than features built from purely correlational signals. We quantify this trend via controlled robustness experiments (Sec.~4.2.3) and sensitivity analyses under graph misspecification (Appendix~B).

\noindent This perspective motivates organizing features by approximate causal relationships: direct causes ($X_j \to Y$), indirect causes ($X_j \leadsto Y$ via paths of length $\geq 2$), and other variables. This causal hierarchy acts as a soft inductive bias that guides but does not constrain feature construction decisions.

\vspace{-0.2cm}
\subsection{Phase I: Causal Graph Discovery as a Soft Inductive Prior for Feature Grouping}

In Phase I, we construct a causal graph to organize the feature space by underlying causal relationships rather than statistical correlations. The central insight is that features with direct or indirect causal influence on the target provide more reliable guidance for feature construction, offering stable inductive signals under mechanism-preserving shifts, whereas correlations may be spurious.

\textbf{Causal Discovery Backend.} Given an observational dataset $\mathcal{D} = \{(\mathbf{x}_i, y_i)\}_{i=1}^N$, we construct a Directed Acyclic Graph (DAG) $\mathcal{G} = (\mathcal{V}, \mathcal{E})$ over the variable set $\mathcal{V} = \mathcal{F} \cup \{y\}$, where $\mathcal{F}$ represents the original features. Each directed edge $(u, v) \in \mathcal{E}$ represents a causal relationship from variable $u$ to variable $v$. 

We employ NOTEARS with Lasso regularization~\cite{10.5555/3327546.3327618} as our primary causal discovery method, though the framework accommodates alternative algorithms (Appendix~B). Given dataset $\mathcal{D} = \{(\mathbf{x}_i, y_i)\}_{i=1}^N$ with features $\mathcal{F}$ and target $y$, we learn a DAG $\mathcal{G} = (\mathcal{V}, \mathcal{E})$ over $\mathcal{V} = \mathcal{F} \cup \{y\}$ by optimizing:
\vspace{-0.15cm}
\begin{equation}
\min_{W \in \mathbb{R}^{d \times d}} \frac{1}{2N}\|X - XW\|_F^2 + \lambda\|W\|_1 \quad \text{s.t.} \quad h(W) = \text{tr}(e^{W \circ W}) - d = 0
\label{eq:notears}
\end{equation}

where $W$ is the weighted adjacency matrix, $\lambda > 0$ controls sparsity, $\circ$ denotes Hadamard product, and $h(W) = 0$ enforces acyclicity via a differentiable constraint. This formulation enables gradient-based optimization while maintaining theoretical guarantees about DAG structure. The optimized adjacency matrix $W$ defines the edge weights of the causal graph $\mathcal{G} = (\mathcal{V}, \mathcal{E})$, where larger weights indicate stronger causal influence. This produces a sparse DAG that captures the most significant causal pathways while filtering out weak or spurious connections.

\textbf{Assumptions and Scope.} Our main analysis assumes: (1) causal sufficiency (no unobserved confounders), (2) linear additive noise model with sparse structure, and (3) mechanism-preserving shifts. We evaluate robustness to assumptions violation and compare alternative discovery methods like PC~\cite{kalisch2007estimating}, GES~\cite{chickering2002optimal}, LiNGAM~\cite{shimizu2014lingam} in Appendix~B.

\textbf{Causal Role Assignment.} From learned DAG $\mathcal{G}$, each feature $f \in \mathcal{F}$ receives a causal role:
\vspace{-0.15cm}
\begin{equation}
\mathcal{M}(f) = \begin{cases}
\text{direct} & \text{if } f \to y \\
\text{indirect} & \text{if } f \leadsto y \text{ (path length } \geq 2\text{)} \\
\text{other} & \text{otherwise}
\end{cases}
\end{equation}

This induces causal-semantic groups $\{\mathcal{C}_{\text{direct}}, \mathcal{C}_{\text{indirect}}, \mathcal{C}_{\text{other}}\}$. To balance signal preservation with computational tractability, we apply within-group screening:
\vspace{-0.15cm}
\begin{equation}
\mathcal{C}_g^* = \begin{cases}
\mathcal{C}_g & \text{if } |\mathcal{C}_g| \leq k_g \\
\text{top-}k_g(\mathcal{C}_g, \text{MI}(\cdot, y)) & \text{otherwise}
\end{cases}
\label{eq:group-screening}
\end{equation}

where top-$k_g$ selects features with highest mutual information with target $y$. Robust fallbacks (Pearson correlation, $\chi^2$ test, stratified sampling) handle cases where MI estimation is unreliable.

These causal roles serve as soft inductive priors and guide feature transformation decisions without rigid constraints. This design maintains flexibility when causal discovery is imperfect while providing principled structure that statistical methods lack.


\vspace{-0.2cm}
\subsection{Phase II: Multi-Agent Reinforcement Learning with Causal Guidance}

Phase I establishes causal-semantic feature groups through theoretically grounded structure discovery. Phase II addresses the exponentially large transformation space ($O(|\mathcal{O}| \cdot |\mathcal{F}|^2)$) via multi-agent reinforcement learning with a Decision Factorization Rationale, where agents select entire causal feature groups rather than individual features.

\textbf{Decision Factorization Rationale.} The AFE action space is combinatorially large: $O(|\mathcal{O}| \times |\mathcal{F}|^2)$ for operator set $\mathcal{O}$ and feature set $\mathcal{F}$. Direct optimization suffers from sparse rewards and high variance. We factorize decisions into three stages: (1) select primary feature group, (2) choose transformation operator, (3) optionally select secondary group for binary operations. This factorization substantially reduces the effective action space each agent explores; from joint choices over $(\text{group}, \text{operator}, \text{partner})$ to three smaller, structured decisions, while preserving semantic coherence through causal organization. Empirically, this reduces value-estimate variance and improves sample efficiency (learning-curve dispersion in Appendix~A). We also include a stylized analysis clarifying conditions where factorization reduces estimation variance; we treat it as a modeling advantage rather than a general guarantee.

\textbf{Multi-Agent Architecture.} Three specialized DQN agents operate in cascade:
\emph{Primary Group Agent} ($\pi_1$): Selects from $\{\mathcal{C}_{\text{direct}}^*, \mathcal{C}_{\text{indirect}}^*, \mathcal{C}_{\text{other}}^*\}$.
\emph{Operator Agent} ($\pi_o$): Chooses transformation from operator library $\mathcal{O}$. \emph{Secondary Group Agent} ($\pi_2$): Selects partner group for binary operations

\textbf{State Representation Design.} Agents receive context through statistical feature summaries and group-specific information:
\begin{align}
s_t^{(1)} &= \text{NestedStats}(\mathcal{F}_{t-1}) \oplus \text{GroupStats}(\mathcal{C}_{\text{direct}}^*, \mathcal{C}_{\text{indirect}}^*, \mathcal{C}_{\text{other}}^*) \\
s_t^{(o)} &= s_t^{(1)} \oplus \text{NestedStats}(\mathcal{C}_t^{(1)}) \\
s_t^{(2)} &= s_t^{(o)} \oplus \text{OneHot}(o_t)
\end{align}

where $\text{NestedStats}(X)$ computes a hierarchical statistical summary: first computing standard descriptive statistics (mean, std, min, max, quartiles) across features, then applying the same descriptive analysis to this summary vector, yielding a compact yet informative representation that captures both individual feature properties and their collective distributional characteristics. This approach provides consistent, interpretable state encodings without requiring learned representations, while maintaining sufficient information density for effective RL decision-making.

\textbf{Causally-Guided Exploration.} We design three principled feature interaction strategies that integrate causal theory, statistical relevance, and diversity discovery with adaptive scheduling:

\noindent\textit{\underline{Causal-Hierarchical Feature Interaction}}: Prioritizes transformations respecting causal structure. For unary operations, favors $\mathcal{C}_{\text{direct}} \cup \mathcal{C}_{\text{other}}$ over indirect causes (which mediate relationships). For binary operations, emphasizes $\mathcal{C}_{\text{direct}} \times \mathcal{C}_{\text{indirect}}$ and $\mathcal{C}_{\text{direct}} \times \mathcal{C}_{\text{direct}}$ combinations.
\\
\textit{\underline{Top-$k$ Mutual Information Selection}}: Hedges against causal discovery errors by incorporating mutual information rankings, selecting features with strongest empirical associations with target.
\\
\textit{\underline{Diversity Sampling}}: Prevents premature convergence through controlled randomization, ensuring adequate exploration of the transformation space.

Strategy weights adapt based on validation performance using exponential moving averages with decay $\alpha$ (Appendix).

\textbf{Reward Function Design.} We align optimization with predictive utility and causal principles:
\begin{equation}
R_t = R_{\text{perf},t} \cdot (1 + \alpha \cdot \Psi_{\text{causal},t}) + \lambda_{\text{div}}\mathcal{H}(\pi_t) - \lambda_{\text{comp}}C(\mathcal{F}_t^g),
\label{eq:reward}
\end{equation}
where $R_{\text{perf},t}=\Delta\text{ValidationScore}_t$, $\alpha \in (0,1)$ controls causal bonus intensity, and
\[
\Psi_{\text{causal},t} = \frac{1}{|\mathcal{F}_t^g|}\sum_{f\in \mathcal{F}_t^g} w_{\mathcal{M}(f)} \cdot \text{Usage}(f),
\quad\text{with}\quad
w_{\text{direct}} > w_{\text{indirect}} > w_{\text{other}} \geq 0.
\]
The modulated structure $(1 + \alpha \cdot \Psi_{\text{causal},t})$ ensures causal bonuses amplify positive improvements while proportionally reducing penalties for causally-relevant features, preventing reward hacking while maintaining causal preferences. The algorithm is given in Appendix~A \ref{alg:cafe}.

\textbf{Robust Feature Generation.} Transformation operators include domain validation and numerical safeguards addressing common failure modes: 

\emph{\underline{Logarithmic operations}}: Apply to $\log(|x| + \epsilon)$ where $\epsilon = 10^{-8}$ for numerical stability.
\\
\emph{\underline{Division operations}}: Use protected division $x \cdot \frac{\mathrm{sign}(y)}{\max(|y|,\epsilon)}$ with $\epsilon=10^{-8}$.
\\
\emph{\underline{Square root}}: Apply $\sqrt{|x|}$ for negative inputs.
\\
\emph{\underline{Range clipping}}: Bound generated features to $[-10^6, 10^6]$ preventing overflow.

For binary operations, we limit candidate pairs to $\min(50, |\mathcal{C}_1| \times |\mathcal{C}_2|)$ through relevance-based sampling, maintaining computational tractability while preserving quality.

\textbf{Learning and Optimization.} Agents train via temporal-difference learning with experience replay and target networks:
\begin{equation}
\mathcal{L}(\theta) = \mathbb{E}_{(s,a,r,s') \sim \mathcal{B}}\left[\left(Q_\theta(s,a) - \left(r + \gamma \max_{a'} Q_{\theta^-}(s',a')\right)\right)^2\right]
\end{equation}

\vspace{-0.3cm}
where $\mathcal{B}$ is the replay buffer, $\theta^-$ denotes target network parameters, and $\gamma$ is the discount factor.

\textbf{Scalability and Termination.}
We implement intelligent pruning and adaptive termination to ensure computational efficiency while maintaining feature quality. Two-stage filtering first applies variance thresholding to remove near-constant features, then uses mutual information ranking to retain top-k features while ensuring minimum representation per causal group.
Multiple termination criteria trigger when: (1) validation improvement falls below $\delta$ for N consecutive episodes; (2) feature count exceeds computational limit; or (3) maximum episodes are reached. This design choice makes the group-choice action constant-size ($|G|=3$), improving exploration stability. With a fixed operator library and capped within-group sampling ($k_g$) and pairing budget $B=\min(50, |C_1|\cdot|C_2|)$, the per-step decision cost is $O(1)$ for group selection and $O(|\mathcal{O}| + B)$ for operator and pairing, independent of raw dimensionality $d$. Overall runtime remains dominated by candidate evaluation, pruning, and downstream model training.
Upon termination, we return the feature set $\mathcal{F}^*$, achieving the highest validation performance across all episodes.
Complete state representations, network architectures, and hyperparameters are detailed in Appendix~C-D for reproducibility.

\vspace{-0.25cm}
\section{Experimental Study}

We designed our experiments to meticulously address the following research questions:
\textbf{RQ1: Predictive Utility.} Does CAFE consistently outperform recent AFE baselines in terms of predictive accuracy?
\textbf{RQ2: Ablation Study.} How do causal priors, exploration strategies, and reward components individually contribute to the performance and efficiency of CAFE?
\textbf{RQ3: Distributional Robustness.} Can CAFE generate features that are resilient under covariate shifts and maintain predictive accuracy on out-of-distribution data?
\textbf{RQ4: Interpretability and Compactness.} Are the synthesized features semantically meaningful and causally aligned with the underlying data-generating process?

\vspace{-0.3cm}
\subsection{Experimental Protocol}

\textbf{Data Description and Evaluation Framework.} We evaluate our framework on 15 diverse public datasets collected from Kaggle, LibSVM, UCIrvine, and OpenML repositories, across two core tasks: (1) classification and (2) regression. The data statistics are given in Table~\ref{table_overall_perf}.
All experiments employ 5-fold stratified cross-validation with fixed random seeds for reproducibility. We report macro-averaged F1-scores for classification and inverse relative absolute error (1-RAE) for regression, both scale-invariant metrics suitable for cross-dataset comparison. 
\\
\textbf{Implementation Details.} We use XGBoost with default hyperparameters as the downstream model, following established automated feature engineering evaluation protocols. This choice enables fair comparison across all methods while avoiding confounding effects from model-specific tuning. An extended evaluation across multiple model families (Random Forest, Neural Networks, Linear Models) is provided in the Appendix.
\\
\textbf{Baseline Methods.} We compare against 10 representative methods spanning three categories:
\\
\emph{\underline{Statistical Baselines}}:
\textbf{Original (ORG):} Raw features without transformation.
\textbf{Random (RDG):} Random transformation application for lower-bound comparison.
\textbf{Exhaustive (ERG):} Systematic transformation followed by statistical selection.
\\
\emph{\underline{Traditional Automated Feature Engineering}}:
\textbf{AFT}~\cite{horn2019autofeat}: AutoFeat - Multi-stage expansion with statistical significance testing.
\textbf{LDA}~\cite{10.5555/944919.944937}: Latent factor extraction via matrix factorization.
\\
\emph{\underline{Modern Learning-Based Methods}}:
\textbf{NFS}~\cite{chen2019NFS}: Sequential RL-based feature construction on individual features.
\textbf{TTG}~\cite{khurana2018TTG}: Graph-based RL transformation path learning.
\textbf{GRFG}~\cite{dwang_group_fsr}: Multi-agent RL without causal guidance (key comparison).
\textbf{ELLM-FT}~\cite{gong2025evolutionary}: LLM-guided evolutionary feature transformation.


\subsection{Experimental Results}
\subsubsection{Overall Performance Analysis (RQ1)}


\begin{table*}[t]
\renewcommand{\arraystretch}{1.3}
\centering
\caption{
Overall performance comparison. `C' for classification and `R' for regression. 
\textbf{Bold} indicates the best result. 
}
\label{table_overall_perf}
\resizebox{\textwidth}{!}{
\begin{tabular}{c|c|c|c|c|c|c|c|c|c|c|c|c|c}
\hline \hline
Dataset            & Source   & C/R & Samples & Features & RDG  & ERG & LDA & AFT   & NFS   & TTG  & GRFG  & ELLM-FT & \textbf{CAFE} \\ 
\hline \hline
Amazon Employee    & Kaggle   & C   & 32769   & 9        & 0.744 & 0.740 & 0.920  & 0.943 & 0.935 & 0.806 & 0.946 & 0.946 & \textbf{0.947} \\  \hline
SVMGuide3          & LibSVM   & C   & 1243    & 21       & 0.703 & 0.747 & 0.683  & 0.829 & 0.831 & 0.766 & 0.831 & 0.836 & \textbf{0.846} \\  \hline
German Credit      & UCIrvine & C   & 1001    & 24       & 0.695 & 0.661 & 0.627  & 0.751 & 0.765 & 0.731 & 0.772 & 0.775 & \textbf{0.793} \\  \hline
Messidor\_features & UCIrvine & C   & 1150    & 19       & 0.673 & 0.635 & 0.580  & 0.678 & 0.746 & 0.726 & 0.757 & 0.757 & \textbf{0.774} \\  \hline
Ionosphere         & UCIrvine & C   & 351     & 34       & 0.919 & 0.926 & 0.730  & 0.827 & 0.949 & 0.938 & 0.960 & 0.963 & \textbf{0.976} \\  \hline
Wine Quality Red   & UCIrvine & C   & 999     & 12       & 0.599 & 0.611 & 0.600  & 0.658 & 0.666 & 0.647 & 0.686 & 0.685 & \textbf{0.707} \\ \hline
Wine Quality White & UCIrvine & C   & 4900    & 12       & 0.552 & 0.587 & 0.571  & 0.673 & 0.679 & 0.638 & 0.685 & 0.689 & \textbf{0.752} \\
\hline \hline
Housing Boston     & UCIrvine & R   & 506     & 13       & 0.605 & 0.617 & 0.374  & 0.641 & 0.665 & 0.658 & \textbf{0.684} & 0.671 & 0.681 \\ \hline
Airfoil            & UCIrvine & R   & 1503    & 5        & 0.737 & 0.732 & 0.463  & 0.774 & 0.771 & 0.783 & 0.797 & 0.786 & \textbf{0.799} \\ \hline
Openml\_586        & OpenML   & R   & 1000    & 25       & 0.595 & 0.546 & 0.472  & 0.687 & 0.748 & 0.704 & 0.783 & 0.801 & \textbf{0.810} \\  \hline
Openml\_589        & OpenML   & R   & 1000    & 25       & 0.638 & 0.560 & 0.331  & 0.672 & 0.711 & 0.682 & 0.753 & 0.781 & \textbf{0.783} \\  \hline
Openml\_607        & OpenML   & R   & 1000    & 50       & 0.579 & 0.406 & 0.376  & 0.658 & 0.675 & 0.639 & 0.680 & 0.793 & \textbf{0.793} \\  \hline
Openml\_616        & OpenML   & R   & 500     & 50       & 0.448 & 0.472 & 0.385  & 0.585 & 0.593 & 0.559 & 0.603 & 0.739 & \textbf{0.751} \\  \hline
Openml\_618        & OpenML   & R   & 1000    & 50       & 0.415 & 0.427 & 0.372  & 0.665 & 0.640 & 0.587 & 0.672 & 0.778 & \textbf{0.790} \\  \hline
Openml\_620        & OpenML   & R   & 1000    & 25       & 0.575 & 0.584 & 0.425  & 0.663 & 0.698 & 0.656 & 0.714 & 0.725 & \textbf{0.725} \\  \hline

\end{tabular}
}
\end{table*}

To answer RQ1, we evaluate CAFE against nine competitive baselines across 15 benchmark datasets spanning classification and regression tasks with diverse modalities, sample sizes, and dimensionalities (Table~\ref{table_overall_perf}). 
CAFE achieves superior performance on 13 of 15 datasets, demonstrating consistent advantages from causally-guided feature engineering. Notable improvements include high-dimensional regression tasks (\textit{OpenML\_616}: 1.6\%, \textit{OpenML\_618}: 1.5\%) and small-sample classification problems (\textit{Ionosphere}: 1.4\%, \textit{SVMGuide3}: 1.2\%), indicating effective causal structure leverage across varied domains. Established benchmarks show meaningful gains: \textit{German Credit} (2.3\%) and \textit{Messidor Features} (1.7\%),  underscoring the reliability of its inductive bias under feature sparsity and class imbalance.
CAFE does not uniformly dominate—GRFG wins on \textit{Housing Boston} (0.684 vs 0.681) and CAFE ties with ELLM-FT on \textit{OpenML\_620} (0.725). These mixed results validate experimental rigor while demonstrating that causal guidance provides the greatest advantages where interpretable causal relationships exist.
Results confirm our hypothesis that integrating causal discovery as soft inductive bias with multi-agent reinforcement learning enables more robust feature transformations than purely statistical approaches.

\vspace{-0.3cm}
\subsubsection{Ablation Study and Component Analysis (RQ2)}

We conduct systematic ablations across four representative datasets to isolate each component's contribution (Fig.~\ref{fig:ab_study}).
\textbf{Causal priors prove essential}. Replacing causal discovery with correlation-based grouping (CAFE$_{\neg \mathcal{G}}$) consistently degrades performance, with substantial drops on Wine Quality White (8.0\% reduction) and Housing Boston (3.0\% reduction), confirming that causal structure provides superior inductive bias over statistical clustering.
\textbf{Exploration strategies show clear hierarchy}. Pure causal-hierarchical exploration (CAFE$_{\mathcal{E}_c}$) consistently outperforms mutual information-based exploration (CAFE$_{\mathcal{E}_t}$) across all datasets, with the advantage most pronounced on Wine Quality datasets where causal relationships are interpretable.
\textbf{Reward components contribute synergistically}. Removing causal reward amplification (CAFE$_{\neg \mathcal{R}_c}$) reduces performance by 1.5-2.8\% across datasets. Excluding diversity rewards (CAFE$_{\neg \mathcal{R}_d}$) causes performance drops, particularly in Wine Quality White (7.2\% reduction), while omitting complexity penalties (CAFE$_{\neg \mathcal{R}_p}$) shows mixed effects, suggesting this component primarily prevents overfitting rather than improving optimization.
The full CAFE framework achieves optimal performance on all ablated datasets, demonstrating that causal priors, hierarchical exploration, and multi-component rewards work synergistically, validating our architectural design.

\begin{figure*}[htbp]
\centering
\subfigure[Wine red]{
\includegraphics[width=3.4cm, height=3cm]{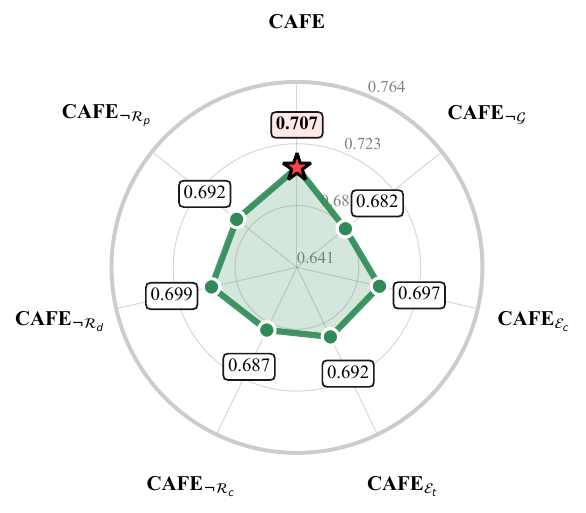}
}
\hspace{-3mm}
\subfigure[Wine white]{ 
\includegraphics[width=3.4cm, height=3cm]{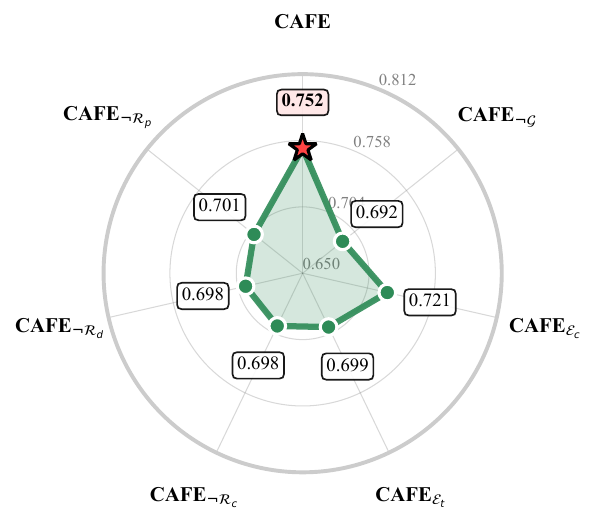}
}
\hspace{-3mm}
\subfigure[Housing Boston]{
\includegraphics[width=3.4cm, height=3cm]{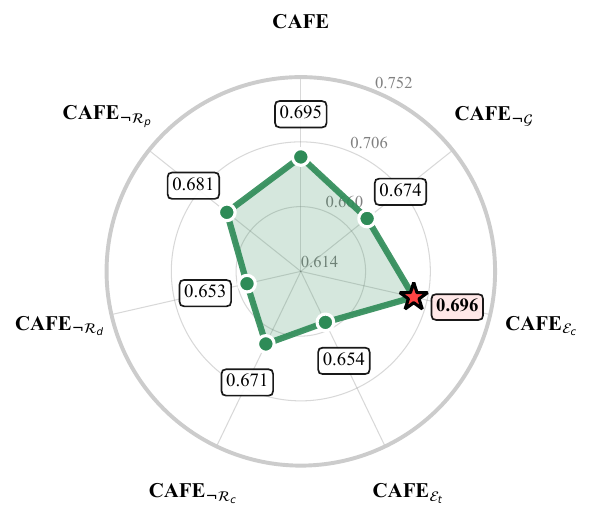}
}
\hspace{-3mm}
\subfigure[Airfoil]{ 
\includegraphics[width=3.4cm, height=3cm]{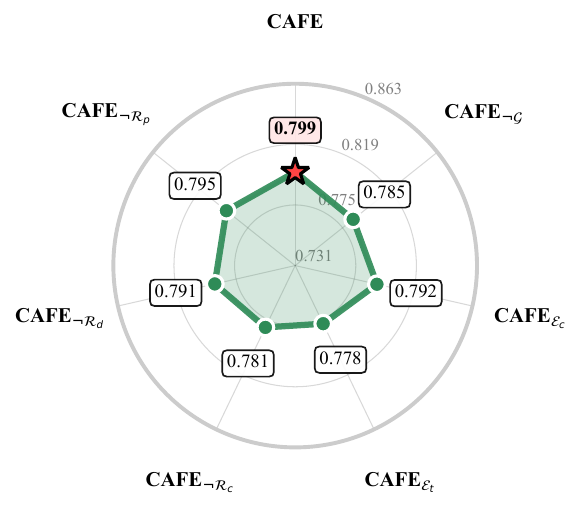}
}
\vspace{-0.35cm}
\caption{Comparison of different CAFE variants in terms of F1 or 1-RAE.}
\label{fig:ab_study}
\vspace{-0.45cm}
\end{figure*}

\subsubsection{Robustness Analysis (RQ3)}

To assess generalization under distribution shifts, we systematically evaluate robustness by applying multiplicative and additive transformations to feature distributions while preserving underlying causal relationships. We test across low, medium, and high shift intensities, comparing CAFE against GRFG, a statistical multi-agent RL baseline on four diverse datasets (Fig.~\ref{fig:rob_analysis}).
CAFE consistently shows stronger robustness, with average degradation of only $7.1\%$ versus $28.1\%$ for GRFG, a roughly fourfold improvement supporting the claim that causal-guided features withstand distributional changes better than correlation-based ones.
The advantage is most evident under severe shifts: CAFE remains stable (e.g., \textit{wine\_red} drops just $3.4\%$), while the statistical baseline suffers failures up to $74\%$. Even when in-distribution performance is matched (e.g., \textit{wine\_white}), CAFE’s degradation is far smaller ($22.2\%$ vs. $86.7\%$), indicating robustness derives from causal invariance rather than merely higher base accuracy.
Overall, causal-guided features generalize more reliably across distributions, whereas purely statistical models lean on spurious correlations that collapse under covariate shift. This makes CAFE well-suited for dynamic, real-world deployments and addresses a key shortcoming of existing AFE methods.

\begin{figure}
    \centering
    \includegraphics[width=7cm]{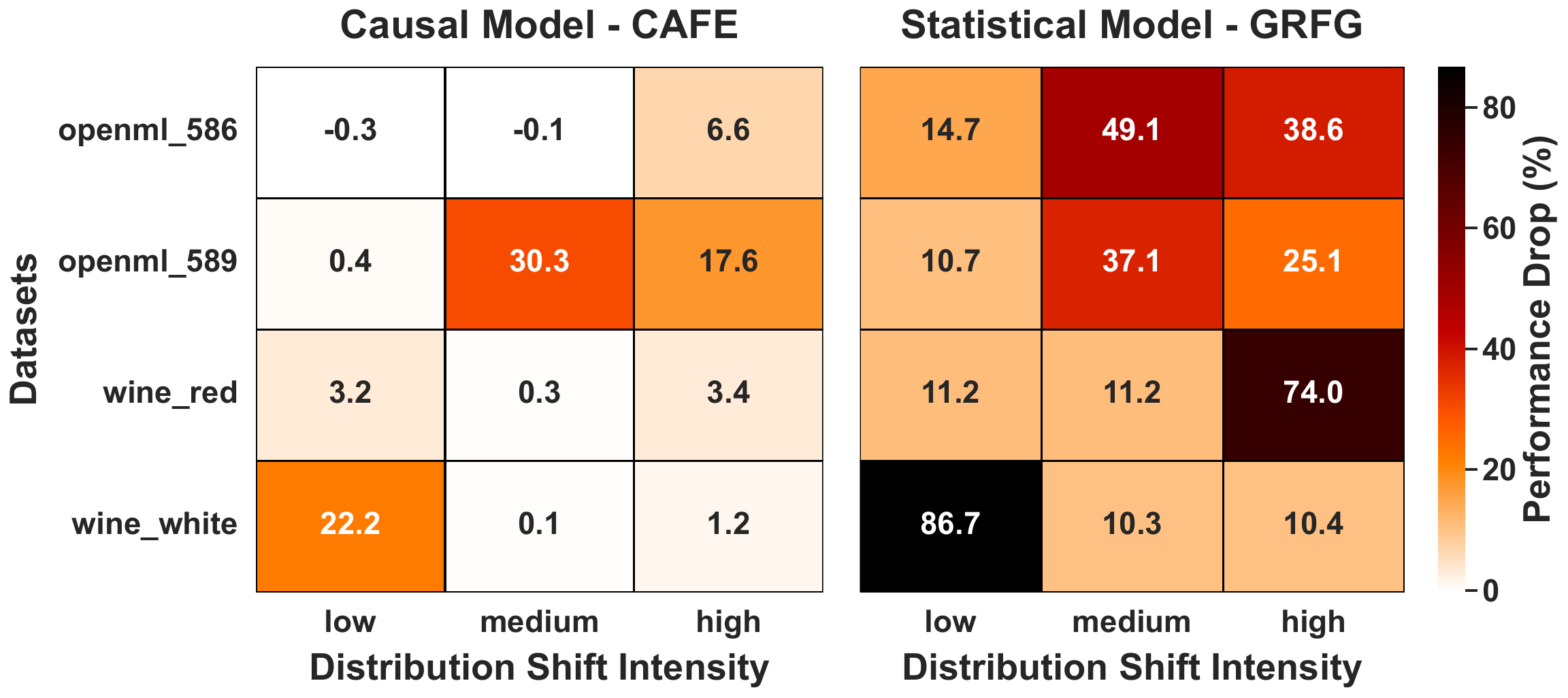}
    \caption{
        Robustness Study of CAFE.
    }
    \label{fig:rob_analysis}
\end{figure}

\vspace{-0.3cm}
\subsubsection{Interpretability Analysis (RQ4)}

To evaluate interpretability, we assess explanation stability via SHAP value variance under controlled input perturbations. We add proportional Gaussian noise ($\sigma \in {0.1, 0.3, 0.5, 0.7}$) to test instances and, for each noise level, compute SHAP variance over 100 perturbations across multiple test points, comparing \textbf{CAFE} to GRFG using TreeSHAP (Fig.~\ref{fig:int_study}).
\textbf{CAFE} shows superior explanation stability across all noise settings and datasets. On \textit{Amazon Employee}, CAFE maintains low SHAP variance while the statistical baseline grows rapidly, yielding up to $90.8\%$ stability improvement. OpenML datasets display the same trend: CAFE stays near-constant as statistical models become volatile.
The advantage increases with noise intensity: at $\sigma=0.1$, CAFE improves stability by $64–90\%$ across datasets, rising to $>85\%$ under high noise. This indicates that causal-guided features yield more reliable explanations by capturing stable mechanisms rather than spurious correlations whose attributions drift under perturbations.
Across all experiments, CAFE reduces explanation variance by an average of $58.6\%$, supporting that causal frameworks produce interpretable representations suitable for safety-critical deployments where explanation reliability is essential.

\begin{figure}[htbp]
\centering
\subfigure[Wine White]{
\includegraphics[width=3.4cm, height=3.1cm]{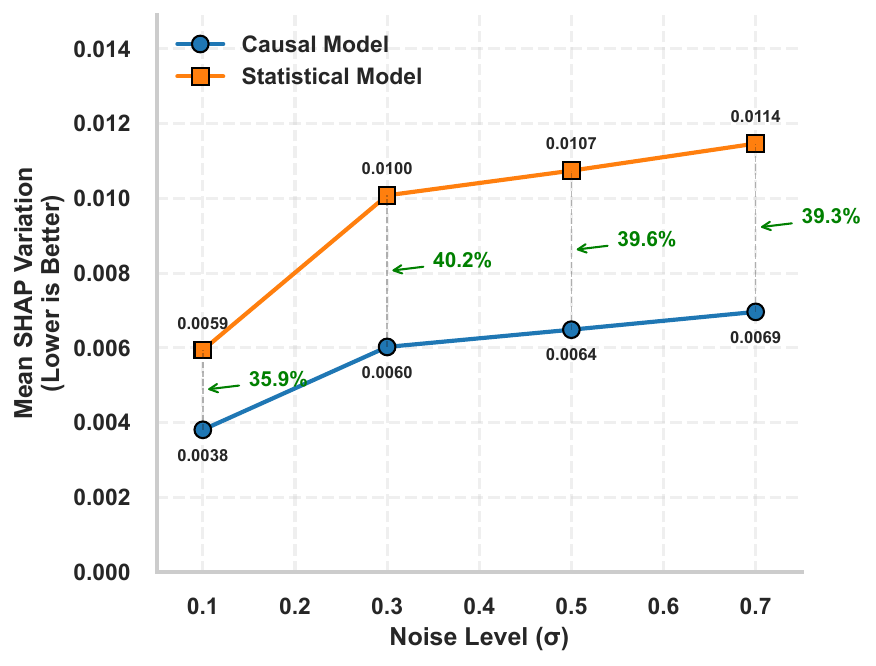}
}
\hspace{-3mm}
\subfigure[Amazon Employee]{ 
\includegraphics[width=3.4cm, height=3.1cm]{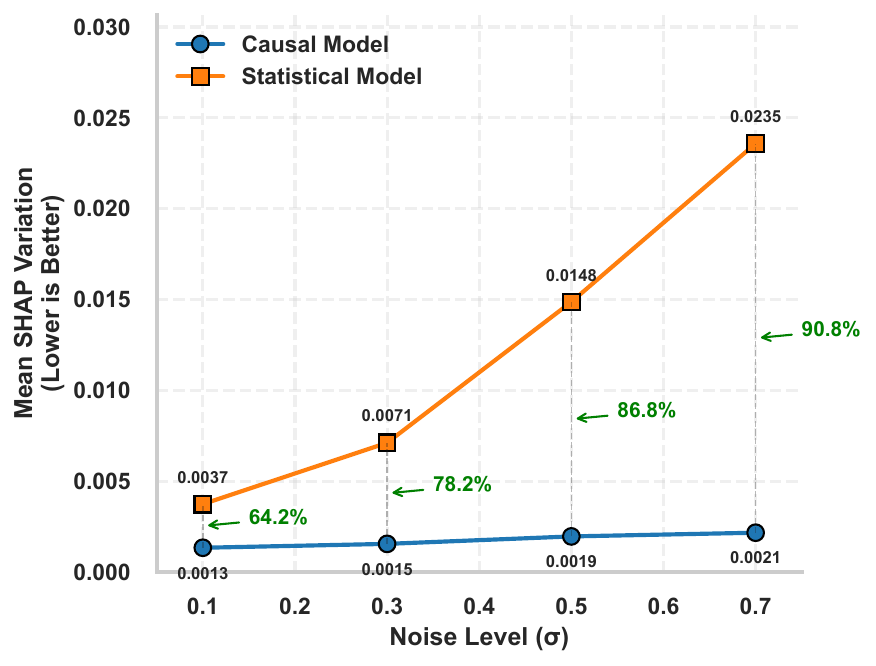}
}
\hspace{-3mm}
\subfigure[Openml\_586]{
\includegraphics[width=3.4cm, height=3.1cm]{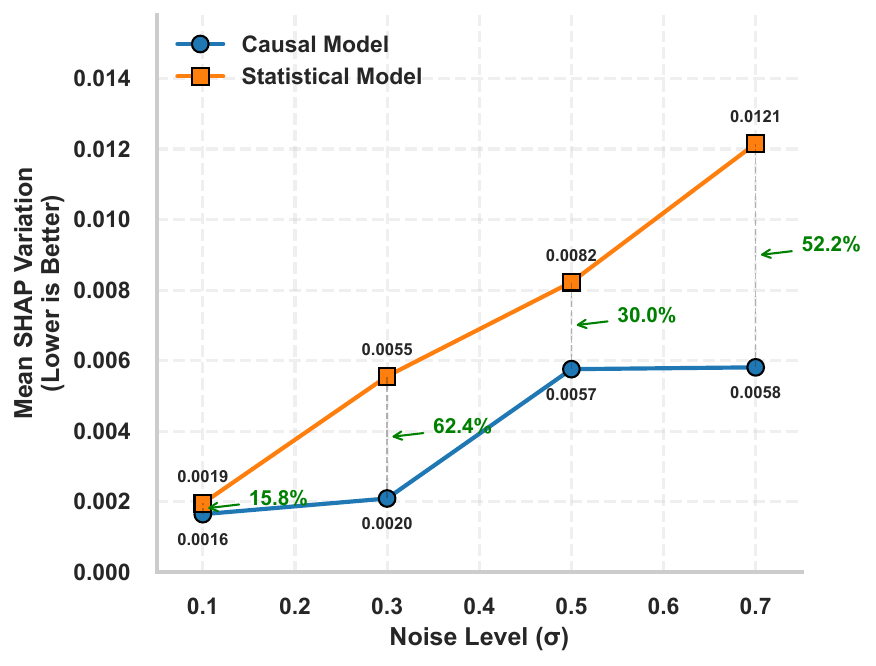}
}
\hspace{-3mm}
\subfigure[Openml\_616]{ 
\includegraphics[width=3.4cm, height=3.2cm]{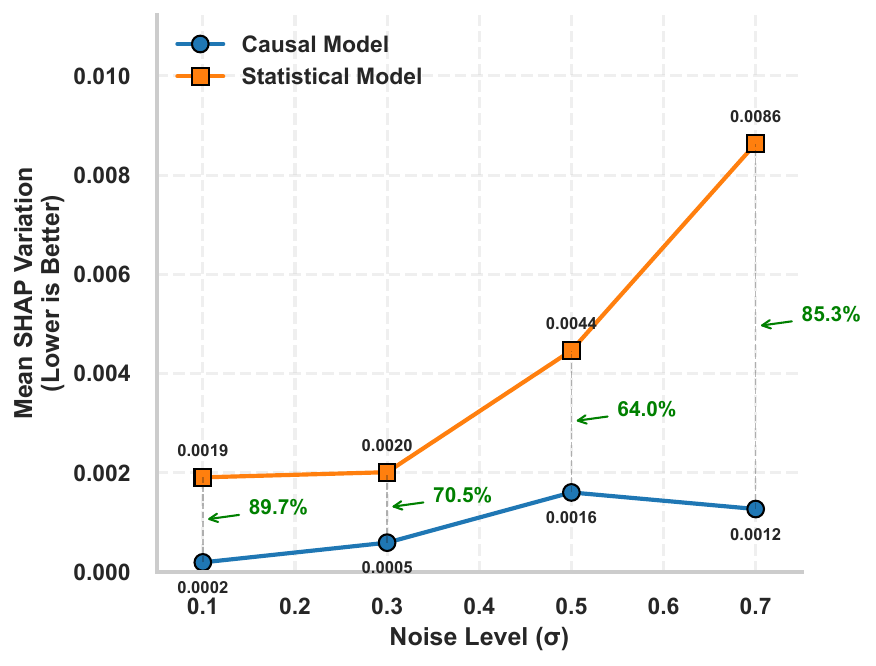}
}
\vspace{-0.35cm}
\caption{SHAP Explanation Stability. Lower variation values indicate more stable model explanations under noise.}
\label{fig:int_study}
\vspace{-0.45cm}
\end{figure}
\vspace{-0.25cm}
\section{Related Works}

\textbf{Automated Feature Engineering (AFE).} Early AFE relies on hand-crafted heuristics~\cite{7836821, 10.1145/3231535.3231538}, which are domain-specific and hard to scale. Modern approaches include: (1) \emph{Expansion–Pruning} (e.g., Deep Feature Synthesis~\cite{7344858}, Autofeat~\cite{horn2019autofeat}), which apply operator libraries then prune by relevance; (2) \emph{Search-Based} methods using genetic programming~\cite{tran2016genetic} or RL with multi-agent policies~\cite{wang2022group, liu2021automated, ying2025survey}; (3) \emph{Neural Controllers} that adapt NAS/RNN controllers for feature generation~\cite{chen2019neural}; and (4) \emph{LLM-based AFE} where language models propose transformations, often with RL/evolutionary refinement~\cite{gong2025evolutionary, zhang2024dynamic, abhyankar2025llm}.

\textbf{Reinforcement Learning for Feature Engineering.} NFS~\cite{chen2019NFS} treats feature creation as sequential decisions on single features; TTG~\cite{khurana2018TTG} encodes transformations as graph traversals; GRFG~\cite{dwang_group_fsr} introduces multi-agent cooperation. These methods largely optimize statistical correlation, offering limited feature organization or robustness to distribution shift, and weaker interpretability for deployment.

\textbf{Causal Discovery and Feature Selection.} Classical filter/wrapper/embedded schemes degrade under high dimensionality, confounding, and shifts~\cite{lamsaf2025causality}. Causal approaches include (1) \emph{Constraint-based} Markov-Blanket discovery via conditional independence tests~\cite{tsamardinos2003algorithms, margaritis1999bayesian, yu2021unified, yu2020causality} and (2) \emph{Score-based} graph learning (e.g., BIC) with strong scalability~\cite{wang2023data, gao2017efficient}. Deep causal discovery broadens robustness and explainability~\cite{yao2023causal, moraffah2024causal, sun2023causal, 11314741}. Yet most causal feature selection is post-hoc filtering; it rarely \emph{guides} automated feature generation. Our work addresses this gap by using learned causal structure as a \emph{soft inductive prior} to steer multi-agent RL feature construction.

\vspace{-0.3cm}

\section{Conclusion}
We introduced CAFE, a principled causally-guided automated feature engineering framework that leverages causal discovery as inductive bias within a multi-agent reinforcement learning paradigm. CAFE consistently improves predictive accuracy, robustness, and interpretability across datasets, demonstrating up to $7\%$ performance gains, four-fold robustness improvement under distribution shifts ($7.1\%$ vs $28.1\%$ degradation), and $58.6\%$ enhancement in explanation stability. These results validate the necessity of causal reasoning in automated feature engineering and establish a new paradigm for robust, interpretable AI systems.
Our framework faces inherent challenges rooted in causal discovery limitations. The reliability of causal structure learning is constrained by data quality, sample size, and validity of causal assumptions. Although our design mitigates these risks by treating causal maps as flexible guidance rather than rigid constraints, unobserved confounders may introduce bias into feature transformations. Additionally, our approach assumes static causal graph structure and does not accommodate temporal dynamics or feedback loops in real-world systems.
Future work includes developing robust causal discovery under partial observability, adaptive mechanisms in streaming environments, and extending CAFE to temporal settings. Incorporating human-in-the-loop feedback and domain expertise may enhance interpretability, unlocking the potential of causal-aware AI in safety-critical applications.

\section*{Ethics Statement}
This work uses only publicly available tabular datasets under their licenses; no personally identifiable information (PII) or protected health information (PHI) is collected or created, and no human-subjects research was conducted. While CAFE’s causal priors aim to reduce reliance on spurious correlations, automated feature engineering can still propagate or create proxies for sensitive attributes; practitioners should (when legally and ethically permissible) audit subgroup performance, avoid or carefully control sensitive features, and conduct domain reviews before deployment. Our claims are predictive, not prescriptive. CAFE does not estimate treatment effects or confer intervention-level causal guarantees; “causal” refers to soft inductive priors, not immutable truths. Given potential high-stakes use, deployments should include human oversight, post-deployment monitoring, and compliance with applicable regulations and institutional governance. We document assumptions (causal sufficiency, linear discovery, mechanism-preserving shifts), limitations, and robustness diagnostics so users can assess suitability in their domains.

\section*{Reproducibility Statement}
To ensure reproducibility, detailed implementation instructions, hyperparameters, and evaluation
protocols for \textsc{CAFE} are provided in the Appendix. The source code and configurations will be provided upon request.

\newpage
\bibliography{main}

@INPROCEEDINGS{7836821,
  author={Khurana, Udayan and Turaga, Deepak and Samulowitz, Horst and Parthasrathy, Srinivasan},
  booktitle={2016 IEEE 16th International Conference on Data Mining Workshops (ICDMW)}, 
  title={Cognito: Automated Feature Engineering for Supervised Learning}, 
  year={2016},
  volume={},
  number={},
  pages={1304-1307},
  doi={10.1109/ICDMW.2016.0190}}

@article{10.1145/3231535.3231538,
author = {Banerjee, Snehasis and Chattopadhyay, Tanushyam and Pal, Arpan and Garain, Utpal},
title = {Automation of feature engineering for IoT analytics},
year = {2018},
issue_date = {March 2018},
publisher = {Association for Computing Machinery},
address = {New York, NY, USA},
volume = {15},
number = {2},
url = {https://doi.org/10.1145/3231535.3231538},
doi = {10.1145/3231535.3231538},
journal = {SIGBED Rev.},
month = jun,
pages = {24–30},
numpages = {7}
}

@INPROCEEDINGS{7344858,
  author={Kanter, James Max and Veeramachaneni, Kalyan},
  booktitle={2015 IEEE International Conference on Data Science and Advanced Analytics (DSAA)}, 
  title={Deep feature synthesis: Towards automating data science endeavors}, 
  year={2015},
  volume={},
  number={},
  pages={1-10},
  doi={10.1109/DSAA.2015.7344858}}

@inproceedings{horn2019autofeat,
  title={The autofeat python library for automated feature engineering and selection},
  author={Horn, Franziska and Pack, Robert and Rieger, Michael},
  booktitle={Joint European Conference on Machine Learning and Knowledge Discovery in Databases},
  pages={111--120},
  year={2019},
  organization={Springer}
}

@article{tran2016genetic,
  title={Genetic programming for feature construction and selection in classification on high-dimensional data},
  author={Tran, Binh and Xue, Bing and Zhang, Mengjie},
  journal={Memetic Computing},
  volume={8},
  number={1},
  pages={3--15},
  year={2016},
  publisher={Springer}
}

@inproceedings{wang2022group,
  title={Group-wise reinforcement feature generation for optimal and explainable representation space reconstruction},
  author={Wang, Dongjie and Fu, Yanjie and Liu, Kunpeng and Li, Xiaolin and Solihin, Yan},
  booktitle={Proceedings of the 28th ACM SIGKDD Conference on Knowledge Discovery and Data Mining},
  pages={1826--1834},
  year={2022}
}

@article{liu2021automated,
  title={Automated feature selection: A reinforcement learning perspective},
  author={Liu, Kunpeng and Fu, Yanjie and Wu, Le and Li, Xiaolin and Aggarwal, Charu and Xiong, Hui},
  journal={IEEE Transactions on Knowledge and Data Engineering},
  volume={35},
  number={3},
  pages={2272--2284},
  year={2021},
  publisher={IEEE}
}

@inproceedings{chen2019neural,
  title={Neural feature search: A neural architecture for automated feature engineering},
  author={Chen, Xiangning and Lin, Qingwei and Luo, Chuan and Li, Xudong and Zhang, Hongyu and Xu, Yong and Dang, Yingnong and Sui, Kaixin and Zhang, Xu and Qiao, Bo and others},
  booktitle={2019 IEEE International Conference on Data Mining (ICDM)},
  pages={71--80},
  year={2019},
  organization={IEEE}
}

@article{ying2025survey,
  title={A survey on data-centric ai: Tabular learning from reinforcement learning and generative ai perspective},
  author={Ying, Wangyang and Wei, Cong and Gong, Nanxu and Wang, Xinyuan and Bai, Haoyue and Malarkkan, Arun Vignesh and Dong, Sixun and Wang, Dongjie and Zhang, Denghui and Fu, Yanjie},
  journal={arXiv preprint arXiv:2502.08828},
  year={2025}
}

@article{zhang2024dynamic,
  title={Dynamic and adaptive feature generation with llm},
  author={Zhang, Xinhao and Zhang, Jinghan and Rekabdar, Banafsheh and Zhou, Yuanchun and Wang, Pengfei and Liu, Kunpeng},
  journal={arXiv preprint arXiv:2406.03505},
  year={2024}
}

@article{abhyankar2025llm,
  title={LLM-FE: Automated Feature Engineering for Tabular Data with LLMs as Evolutionary Optimizers},
  author={Abhyankar, Nikhil and Shojaee, Parshin and Reddy, Chandan K},
  journal={arXiv preprint arXiv:2503.14434},
  year={2025}
}

@inproceedings{gong2025evolutionary,
  title={Evolutionary large language model for automated feature transformation},
  author={Gong, Nanxu and Reddy, Chandan K and Ying, Wangyang and Chen, Haifeng and Fu, Yanjie},
  booktitle={Proceedings of the AAAI conference on artificial intelligence},
  volume={39},
  number={16},
  pages={16844--16852},
  year={2025}
}

@article{lamsaf2025causality,
  title={Causality, machine learning, and feature selection: a survey},
  author={Lamsaf, Asmae and Carrilho, Rui and Neves, Jo{\~a}o C and Proen{\c{c}}a, Hugo},
  journal={Sensors},
  volume={25},
  number={8},
  pages={2373},
  year={2025},
  publisher={MDPI}
}

@article{sun2023causal,
  title={A causal model-inspired automatic feature-selection method for developing data-driven soft sensors in complex industrial processes},
  author={Sun, Yan-Ning and Qin, Wei and Hu, Jin-Hua and Xu, Hong-Wei and Sun, Poly ZH},
  journal={Engineering},
  volume={22},
  pages={82--93},
  year={2023},
  publisher={Elsevier}
}

@inproceedings{tsamardinos2003algorithms,
  title={Algorithms for large scale Markov blanket discovery.},
  author={Tsamardinos, Ioannis and Aliferis, Constantin F and Statnikov, Alexander R and Statnikov, Er},
  booktitle={FLAIRS},
  volume={2},
  pages={376--81},
  year={2003}
}

@article{margaritis1999bayesian,
  title={Bayesian network induction via local neighborhoods},
  author={Margaritis, Dimitris and Thrun, Sebastian},
  journal={Advances in neural information processing systems},
  volume={12},
  year={1999}
}

@article{yu2020causality,
  title={Causality-based feature selection: Methods and evaluations},
  author={Yu, Kui and Guo, Xianjie and Liu, Lin and Li, Jiuyong and Wang, Hao and Ling, Zhaolong and Wu, Xindong},
  journal={ACM Computing Surveys (CSUR)},
  volume={53},
  number={5},
  pages={1--36},
  year={2020},
  publisher={ACM New York, NY, USA}
}

@article{yu2021unified,
  title={A unified view of causal and non-causal feature selection},
  author={Yu, Kui and Liu, Lin and Li, Jiuyong},
  journal={ACM Transactions on Knowledge Discovery from Data (TKDD)},
  volume={15},
  number={4},
  pages={1--46},
  year={2021},
  publisher={ACM New York, NY, USA}
}

@article{wang2023data,
  title={A data feature extraction method based on the NOTEARS causal inference algorithm},
  author={Wang, Hairui and Li, Junming and Zhu, Guifu},
  journal={Applied Sciences},
  volume={13},
  number={14},
  pages={8438},
  year={2023},
  publisher={MDPI}
}

@article{gao2017efficient,
  title={Efficient score-based Markov blanket discovery},
  author={Gao, Tian and Ji, Qiang},
  journal={International Journal of Approximate Reasoning},
  volume={80},
  pages={277--293},
  year={2017},
  publisher={Elsevier}
}

@article{yao2023causal,
  title={Causal variable selection for industrial process quality prediction via attention-based GRU network},
  author={Yao, Le and Ge, Zhiqiang},
  journal={Engineering Applications of Artificial Intelligence},
  volume={118},
  pages={105658},
  year={2023},
  publisher={Elsevier}
}

@article{moraffah2024causal,
  title={Causal feature selection for responsible machine learning},
  author={Moraffah, Raha and Sheth, Paras and Vishnubhatla, Saketh and Liu, Huan},
  journal={arXiv preprint arXiv:2402.02696},
  year={2024}
}

@inproceedings{10.5555/3327546.3327618,
author = {Zheng, Xun and Aragam, Bryon and Ravikumar, Pradeep and Xing, Eric P.},
title = {DAGs with NO TEARS: continuous optimization for structure learning},
year = {2018},
publisher = {Curran Associates Inc.},
address = {Red Hook, NY, USA},
booktitle = {Proceedings of the 32nd International Conference on Neural Information Processing Systems},
pages = {9492–9503},
numpages = {12},
location = {Montr\'{e}al, Canada},
series = {NIPS'18}
}

@inproceedings{chen2019NFS,
  author       = {Xiangning Chen and
                  Bo Qiao and
                  Weiyi Zhang and
                  Wei Wu and
                  Murali Chintalapati and
                  Dongmei Zhang and
                  Qingwei Lin and
                  Chuan Luo and
                  Xudong Li and
                  Hongyu Zhang and
                  Yong Xu and
                  Yingnong Dang and
                  Kaixin Sui and
                  Xu Zhang},
  editor       = {Jianyong Wang and
                  Kyuseok Shim and
                  Xindong Wu},
  title        = {Neural Feature Search: {A} Neural Architecture for Automated Feature
                  Engineering},
  booktitle    = {2019 {IEEE} International Conference on Data Mining, {ICDM} 2019,
                  Beijing, China, November 8-11, 2019},
  pages        = {71--80},
  publisher    = {{IEEE}},
  year         = {2019},
  url          = {https://doi.org/10.1109/ICDM.2019.00017},
  doi          = {10.1109/ICDM.2019.00017},
  bibsource    = {dblp computer science bibliography, https://dblp.org}
}

@inproceedings{khurana2018TTG,
  author       = {Udayan Khurana and
                  Horst Samulowitz and
                  Deepak S. Turaga},
  editor       = {Sheila A. McIlraith and
                  Kilian Q. Weinberger},
  title        = {Feature Engineering for Predictive Modeling Using Reinforcement Learning},
  booktitle    = {Proceedings of the Thirty-Second {AAAI} Conference on Artificial Intelligence,
                  (AAAI-18), the 30th innovative Applications of Artificial Intelligence
                  (IAAI-18), and the 8th {AAAI} Symposium on Educational Advances in
                  Artificial Intelligence (EAAI-18), New Orleans, Louisiana, USA, February
                  2-7, 2018},
  pages        = {3407--3414},
  publisher    = {{AAAI} Press},
  year         = {2018},
  url          = {https://doi.org/10.1609/aaai.v32i1.11678},
  doi          = {10.1609/AAAI.V32I1.11678},
  bibsource    = {dblp computer science bibliography, https://dblp.org}
}

@misc{dwang_group_fsr,
    title={Group-wise Reinforcement Feature Generation for Optimal and Explainable Representation Space Reconstruction},
    author={Dongjie Wang and Yanjie Fu and Kunpeng Liu and Xiaolin Li and Yan Solihin},
    year={2022},
    eprint={2205.14526},
    archivePrefix={arXiv},
    primaryClass={cs.LG}
}

@article{10.5555/944919.944937,
author = {Blei, David M. and Ng, Andrew Y. and Jordan, Michael I.},
title = {Latent dirichlet allocation},
year = {2003},
issue_date = {3/1/2003},
publisher = {JMLR.org},
volume = {3},
number = {null},
issn = {1532-4435},
journal = {J. Mach. Learn. Res.},
month = mar,
pages = {993–1022},
numpages = {30}
}

@article{peters2016causal,
  title={Causal inference by using invariant prediction: identification and confidence intervals},
  author={Peters, Jonas and B{\"u}hlmann, Peter and Meinshausen, Nicolai},
  journal={Journal of the Royal Statistical Society Series B: Statistical Methodology},
  volume={78},
  number={5},
  pages={947--1012},
  year={2016},
  publisher={Oxford University Press}
}

@article{kalisch2007estimating,
  title={Estimating high-dimensional directed acyclic graphs with the PC-algorithm.},
  author={Kalisch, Markus and B{\"u}hlman, Peter},
  journal={Journal of Machine Learning Research},
  volume={8},
  number={3},
  year={2007}
}

@article{shimizu2014lingam,
  title={LiNGAM: Non-Gaussian methods for estimating causal structures},
  author={Shimizu, Shohei},
  journal={Behaviormetrika},
  volume={41},
  number={1},
  pages={65--98},
  year={2014},
  publisher={Springer}
}

@article{chickering2002optimal,
  title={Optimal structure identification with greedy search},
  author={Chickering, David Maxwell},
  journal={Journal of machine learning research},
  volume={3},
  number={Nov},
  pages={507--554},
  year={2002}
}

@article{malarkkan2025delta,
  title={DELTA: Variational Disentangled Learning for Privacy-Preserving Data Reprogramming},
  author={Malarkkan, Arun Vignesh and Bai, Haoyue and Kaushik, Anjali and Fu, Yanjie},
  journal={arXiv preprint arXiv:2509.00693},
  year={2025}
}

@ARTICLE{11314741,
  author={Malarkkan, Arun Vignesh and Wang, Dongjie and Bai, Haoyue and Fu, Yanjie},
  journal={IEEE Transactions on Big Data}, 
  title={Incremental Causal Graph Learning for Online Cyberattack Detection in Cyber-Physical Infrastructures}, 
  year={2025},
  volume={},
  number={},
  pages={1-12},
  doi={10.1109/TBDATA.2025.3648314}}
\bibliographystyle{acm} 

\clearpage
\appendix
\renewcommand{\appendixpagename}{\centering\Large Appendix}
\appendixpage






\section{Algorithm, Theoretical Analysis and Formal Foundations}
\label{app:theory}

\subsection{Causal Assumptions and Scope}

\begin{assumption}[Structural Causal Model]
\label{ass:scm}
The data generating process follows a Structural Causal Model (SCM) with variables $\mathbf{X} = (X_1, \ldots, X_d)$ and target $Y$, where each variable follows:
\[
X_j := f_j(\mathrm{PA}_j, \varepsilon_j), \quad Y := f_Y(\mathrm{PA}_Y, \varepsilon_Y)
\]
with parents $\mathrm{PA}_j \subset \mathbf{X} \setminus \{X_j\}$, noise terms $\varepsilon_j$, and functions $f_j$.
\end{assumption}

\begin{assumption}[Causal Sufficiency]
\label{ass:sufficiency}
All common causes of the observed variables are included in the variable set, i.e., no unobserved confounders exist between any pair of variables.
\end{assumption}

\begin{assumption}[Mechanism Stability]
\label{ass:stability}
Under distribution shift from $P(\mathbf{X}, Y)$ to $P'(\mathbf{X}, Y)$, the conditional distributions remain invariant: $P(X_j | \mathrm{PA}_j) = P'(X_j | \mathrm{PA}_j)$ for all $j$, and $P(Y | \mathrm{PA}_Y) = P'(Y | \mathrm{PA}_Y)$.
\end{assumption}

\begin{definition}[Causal Ancestry]
For variables $X_i, X_j$ in DAG $\mathcal{G}$, we define:
\begin{itemize}
    \item $X_i$ is a \emph{direct cause} of $X_j$ if $(X_i, X_j) \in E(\mathcal{G})$
    \item $X_i$ is an \emph{indirect cause} of $X_j$ if there exists a directed path $X_i \rightsquigarrow X_j$ in $\mathcal{G}$ of length $\geq 2$
    \item $X_i$ is \emph{causally irrelevant} to $X_j$ if no directed path exists from $X_i$ to $X_j$
\end{itemize}
\end{definition}

\subsection{Theoretical Justification for Causal Feature Engineering}

\begin{proposition}[Causal Feature Invariance]
\label{prop:causal_invariance}
Under Assumptions~\ref{ass:scm}-\ref{ass:stability}, let $\phi: \mathbb{R}^{|S|} \to \mathbb{R}$ be a measurable transformation applied to feature set $S \subseteq \mathbf{X}$. If $S \cap \mathrm{An}(Y) \neq \emptyset$ where $\mathrm{An}(Y)$ denotes the ancestors of $Y$ in the true DAG, then:
\[
E[Y | \phi(S)] \text{ is more stable under mechanism-preserving shifts than } E[Y | \phi(S')]
\]
where $S' \cap \mathrm{An}(Y) = \emptyset$.
\end{proposition}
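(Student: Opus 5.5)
The statement is comparative and informal, so the first step is to make ``more stable'' precise. For a feature set $T$ and transformation $\phi$, I would define the instability index
\[
\Delta(\phi,T) \;=\; \sup_{P'\in\mathcal{P}_{\mathrm{mp}}} \bigl\| E[Y\mid \phi(T)] - E'[Y\mid \phi(T)] \bigr\|_{L^2(P)},
\]
where $\mathcal{P}_{\mathrm{mp}}$ is the class of mechanism-preserving shifts allowed by Assumption~\ref{ass:stability}. I would take these to be interventions that change the exogenous laws $P(\varepsilon_j)$ of root variables while keeping every $f_j$ and $f_Y$ fixed. The target is then $\Delta(\phi,S)\le\Delta(\phi,S')$.

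Before proving anything, I would record that this inequality fails without further restrictions. Take $S$ to be a single remote ancestor of $Y$ and $S'=\emptyset$: then $E[Y\mid\phi(S')]=E[Y]$, and the shift can move $E[Y\mid\phi(S)]$ at least as much as it moves $E[Y]$. So the proof has to work in a regime where the comparison holds, and I would state that regime explicitly.

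\textbf{Step 1 (structure of $S'$).} Under Assumptions~\ref{ass:scm}--\ref{ass:sufficiency}, every $X\in S'$ is either a descendant of $Y$, or a non-ancestor that is d-connected to $Y$ only through common ancestors in $\mathrm{An}(Y)$. In both cases $Y$ is not a function of $S'$ plus noise. Instead, $E[Y\mid\phi(S')]$ is obtained by Bayes inversion, and it depends on the marginal law of $\mathrm{An}(Y)\cap\mathrm{An}(S')$ and on $P(Y)$. This is exactly what mechanism-preserving shifts are allowed to alter.

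\textbf{Step 2 (structure of $S$).} Write $A=S\cap\mathrm{An}(Y)\neq\emptyset$. Using the recursive factorization along the true DAG, I would express $E[Y\mid\phi(S)]$ as an integral of the invariant mechanism $f_Y$ against $P(\mathrm{PA}_Y\mid\phi(S))$. Conditioning on $A$ fixes part of the ancestral configuration, so only the ancestors of $Y$ that are not d-separated from $Y$ by $A$ still carry the shift. The goal of this step is a bound of the form $\Delta(\phi,S)\le C\cdot(\text{shift mass on } \mathrm{An}(Y)\setminus \text{cl}(A))$.

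\textbf{Step 3 (comparison in the linear-Gaussian case).} To turn Steps~1--2 into an actual inequality, I would specialize to the linear additive-noise model already assumed for Phase~I, with $\phi$ injective on $A$. The shifts are taken to rescale root-noise variances and means. Both conditional expectations are then linear, with coefficients of the form $\Sigma_{Y,T}\Sigma_{T,T}^{-1}$, and $\Delta$ is the operator-norm sensitivity of these coefficients to the root covariances. I would show that including $A$ removes the dependence of the coefficient on every root whose only path to $Y$ passes through $A$. For $S'$, by contrast, the coefficient depends on all roots in $\mathrm{An}(Y)\cap\mathrm{An}(S')$ as well as $\mathrm{Var}(\varepsilon_Y)$.

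The comparison then holds under one condition: $A$ must d-separate from $Y$ the roots that influence $S'$. When $A\supseteq\mathrm{PA}_Y$, this gives $\Delta(\phi,S)=0$, which recovers Proposition~\ref{prop:causal-stability} as the limiting case. It also covers the case in which $S'$ contains descendants of $Y$, because there $\Delta(\phi,S')>0$ as soon as $\mathrm{Var}(\varepsilon_Y)$ or any ancestral root is shifted.

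\textbf{Main obstacle.} The hard part is Step~3: showing that partial ancestral conditioning never increases sensitivity relative to the non-ancestral set. In general the inequality is not monotone, as the counterexample above shows. My plan is to prove it under the explicit sufficient condition that $A$ blocks every path from the shifted roots to $Y$ that does not also pass through $S'$. I would then present the unrestricted statement as holding ``typically'': for instance, for generic coefficients, averaged over a symmetric class of shifts. It should not be presented as an unconditional theorem.
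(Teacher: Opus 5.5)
\paragraph{Verdict.} Your route is genuinely different from the paper's. Your central diagnosis, that the unconditional comparison fails, is right. But several steps of your plan, including the counterexample itself, need repair.

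\paragraph{Comparison with the paper.} The paper's proof is a two-sentence sketch: by the causal Markov property and Assumption~\ref{ass:stability}, transformations of causal ancestors ``preserve the structural relationship to $Y$,'' while non-causal variables may carry spurious correlations. It never says what ``more stable'' means, and the step it asserts is exactly the one that fails. Invariance comes from a conditioning set that blocks every path from the shifted variables to $Y$, for example $\mathrm{PA}_Y$ with no descendants of $Y$ and an information-preserving $\phi$. The hypothesis $S\cap\mathrm{An}(Y)\neq\emptyset$ guarantees nothing of that kind.

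Formalizing via $\Delta$, conceding that the unconditional comparison fails, and proving a restricted version is the honest treatment. One point neither you nor the paper makes: read literally, Assumption~\ref{ass:stability} also fixes the root marginals, since $P(X_j\mid\mathrm{PA}_j)=P(X_j)$ when $\mathrm{PA}_j=\emptyset$. With causal sufficiency and the Markov factorization of $P'$ over the same DAG, this forces $P'=P$. Both conditionals are then trivially invariant and the comparison is empty. Your root-law shifts are therefore a \emph{relaxation} of Assumption~\ref{ass:stability}, and you should say so.

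\paragraph{Steps that need repair.}

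\emph{(i) Your witness does not refute the inequality you set up.} Take an unconfounded ancestor $X$ in a linear SCM. Then $Y=tX+N$ with $N\perp X$, so $E[Y\mid X]$ moves only with $E[N]$, while $E[Y]$ moves with $E[N]$ and $tE[X]$. Hence $\Delta(\phi,\{X\})\le\Delta(\phi,\emptyset)$.

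A valid counterexample needs a back-door path. Let $X=R+\varepsilon_X$ and $Y=X+R+\varepsilon_Y$ with zero means. Then $E[Y\mid X]=\bigl(1+\sigma_R^2/(\sigma_R^2+\sigma_X^2)\bigr)X$ moves when $\sigma_R^2$ is rescaled. For an isolated root $Z$, $E[Y\mid Z]=E[Y]=0$ does not move. This reverses the inequality even for a direct cause, and it also contradicts Corollary~\ref{cor:hierarchy}.

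\emph{(ii) Step~3 shifts $\mathrm{Var}(\varepsilon_Y)$.} That changes $P(Y\mid\mathrm{PA}_Y)$, which every version of mechanism stability keeps fixed. It is not a root-law shift either. Ancestral-root shifts suffice for the descendant case: for a child $C=cY+\varepsilon_C$, the slope $c\,\mathrm{Var}(Y)/(c^2\mathrm{Var}(Y)+\sigma_C^2)$ already moves with $\mathrm{Var}(Y)$.

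\emph{(iii) The claim $\Delta(\phi,S)=0$ for $A\supseteq\mathrm{PA}_Y$ needs more than you state.}
\begin{itemize}
\item $S$ may contain descendants of $Y$. Conditioning on a child $C=Y+Z+\varepsilon_C$ with $Z\notin S$ brings the shiftable law of $Z$ back in.
\item ``Injective on $A$'' is ambiguous for a scalar $\phi$ of all of $S$. Your formula $\Sigma_{Y,T}\Sigma_{T,T}^{-1}$ needs $\sigma(\phi(T))=\sigma(T)$ for both $T=S$ and $T=S'$.
\item That requirement excludes the sums, products, ratios and squares CAFE actually builds, and for those invariance fails even when $S=\mathrm{PA}_Y$. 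With $Y=P_1+2P_2+\varepsilon_Y$ and $\phi=P_1+P_2$, $E[Y\mid\phi]=\phi+E[P_2\mid P_1+P_2]+E[\varepsilon_Y]$ depends on the joint law of $(P_1,P_2)$.
\end{itemize}

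\emph{(iv) The blocking condition must be d-separation given all of $S$, not given $A$.} Other members of $S$ can be colliders that reopen paths. An example is $R\to C\leftarrow W\to Y$ with $C\in S$ and $W\notin S$.

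\emph{(v) $\Delta$ needs a shift budget.} As a supremum over an unbudgeted class, it is infinite under mean shifts unless the conditional is exactly invariant.

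\paragraph{What your restricted result delivers.} Once (iv) is fixed, your sufficient condition makes $E[Y\mid\phi(S)]$ exactly invariant. The conclusion is then $\Delta(\phi,S)=0\le\Delta(\phi,S')$, which is Proposition~\ref{prop:causal-stability} again rather than a comparison. The genuinely comparative content, partial ancestral conditioning versus non-ancestral conditioning, remains unproved, and by (i) it is false in general. Your ``typically'' version (generic coefficients, averaged over a symmetric class of shifts) currently has no argument behind it. Either supply one or drop it.
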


\begin{proof}[Proof Sketch]
By the causal Markov property and invariance of conditional mechanisms (Assumption~\ref{ass:stability}), transformations of causal ancestors preserve the structural relationship to $Y$. Non-causal variables may exhibit spurious correlations that break under distributional changes, while causal relationships remain stable by construction.
\end{proof}

\begin{corollary}[Causal Hierarchy for Feature Engineering]
\label{cor:hierarchy}
The stability ranking for feature transformations follows: Direct causes $>$ Indirect causes $>$ Non-causal variables, providing theoretical justification for our causal grouping strategy.
\end{corollary}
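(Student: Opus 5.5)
We read the Corollary as a statement about \emph{groups} rather than individual variables: direct causes, taken together, give $\mathrm{PA}_Y$. The plan is to turn ``stability ranking'' into a precise inclusion ordering of the shifts each group is immune to, and derive it from Proposition~\ref{prop:causal_invariance} plus a d-separation argument.

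A first caveat: Assumption~\ref{ass:stability} read literally preserves every conditional, including the marginals of root nodes. Then the whole joint is invariant and the ranking is vacuous. So I would work with the standard relaxation: a shift is a soft intervention on a target set $T\subseteq\{1,\dots,d\}$ with $Y\notin T$. It may change $P(X_j\mid \mathrm{PA}_j)$ for $j\in T$ and preserves all other mechanisms, including $P(Y\mid\mathrm{PA}_Y)$.

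\textbf{Step 1: stability measure.} Augment the DAG with a regime indicator $I_j\to X_j$ for each $j$, giving a selection diagram. For a feature set $S$, define
\[
\mathcal{T}(S)=\{\,j : I_j \perp\!\!\!\perp Y \mid S \text{ in the augmented graph}\,\}.
\]
By the Markov property, $P(Y\mid S)$, and hence $P(Y\mid\phi(S))$ for injective or sufficient $\phi$, is invariant under every shift whose targets lie in $\mathcal{T}(S)$. ``$S_1$ more stable than $S_2$'' will mean $\mathcal{T}(S_1)\supseteq\mathcal{T}(S_2)$, with strict inclusion in non-degenerate graphs.

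\textbf{Step 2: direct group.} Take $S=\mathcal{C}_{\mathrm{direct}}=\mathrm{PA}_Y$. Every path from $I_j$ to $Y$ must enter $Y$ through a parent, which is a blocked non-collider, or through a child of $Y$. A path through a child of $Y$ reaches $Y$ via some $Y\to C$ edge. Tracing that path back toward $I_j$, it must contain a collider that is not in $\mathrm{An}(\mathrm{PA}_Y)$, unless the path goes back through a parent, which is blocked. Hence $\mathcal{T}(\mathrm{PA}_Y)$ is all of $\{1,\dots,d\}$. This is exactly Proposition~\ref{prop:causal-stability}, so the direct group attains the maximal element of the order.

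\textbf{Step 3: indirect group.} Take $S\subseteq\mathrm{An}(Y)\setminus\mathrm{PA}_Y$. Write
\[
P(Y\mid S)=\int P(Y\mid \mathrm{PA}_Y)\,dP(\mathrm{PA}_Y\mid S).
\]
The first factor is invariant. The second factor is invariant exactly for targets $j$ with $I_j\perp\!\!\!\perp \mathrm{PA}_Y\mid S$. In particular, it fails whenever $j$ is a mediator between $S$ and $Y$, or a parent of $Y$ not screened off by $S$. So $\mathcal{T}(S)\subseteq\mathcal{T}(\mathrm{PA}_Y)$, strictly so as soon as some mediator or unscreened parent exists. A path of length $\ge 2$ guarantees a mediator.

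\textbf{Step 4: non-causal group.} Take $S$ disjoint from $\mathrm{An}(Y)$; such variables are either descendants of $Y$ or non-descendants. If $S$ contains a descendant of $Y$, conditioning on it opens paths $I_j\to\cdots\to Y\to\cdots\to S$ for every $j\in\mathrm{An}(Y)$. So $\mathcal{T}(S)$ excludes all of $\mathrm{An}(Y)$, and also any target on the $Y\to S$ path. If $S$ is neither ancestor nor descendant, $P(Y\mid S)=P(Y)$ whenever $S\perp\!\!\!\perp Y$, which is trivially stable but uninformative. The comparison should therefore be stated among features with nonzero predictive dependence on $Y$; that restriction forces the descendant (or collider-connected) case.

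To link the orders, I would exhibit a target in $\mathcal{T}(S_{\mathrm{ind}})\setminus\mathcal{T}(S_{\mathrm{non}})$: a root ancestor of $S_{\mathrm{ind}}$, screened by $S_{\mathrm{ind}}$, that lies in $\mathrm{An}(Y)$.

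\textbf{Main obstacle.} The ordering is not a total inclusion in every DAG. An indirect set can block paths that a descendant set leaves open, and conversely. So Step~4's comparison with Step~3 will only hold as ``$\mathcal{T}(S_{\mathrm{non}})\subseteq\mathcal{T}(S_{\mathrm{ind}})$ up to targets outside $\mathrm{An}(Y)\cup\mathrm{De}(Y)$''.

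I would first try to prove it for the practically relevant shift class, namely targets in $\mathrm{An}(Y)$, where both inclusions go through cleanly. I would then state the Corollary as a partial order over that class, with ``$>$'' meaning weak inclusion of invariance sets plus strictness under faithfulness.
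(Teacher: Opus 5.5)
The paper does not actually prove this corollary. It is asserted right after the proof sketch of Proposition~\ref{prop:causal_invariance}, which only contrasts sets meeting $\mathrm{An}(Y)$ with sets avoiding it and says nothing that separates direct from indirect causes. Your route is therefore genuinely different from the paper's and much more substantive.

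You are right that Assumption~\ref{ass:stability}, read literally, leaves the joint law unchanged, so a relaxation such as your soft-intervention/selection-diagram model is unavoidable. Your group-level reading is also forced. For $Y:=X_1+X_2+\varepsilon$, $X_2:=X_3+\eta$ one gets $\mathcal{T}(\{X_1\})=\{1\}$ and $\mathcal{T}(\{X_3\})=\{3\}$, so a single direct cause is not ranked above an indirect one.

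The direct-versus-indirect half is sound. $\mathcal{T}(\mathrm{PA}_Y)$ is everything by the local Markov property alone, since each $I_j$ is a non-descendant of $Y$. Strictness holds because $I_P\to P\to Y$ is open given any $S\subseteq\mathrm{An}(Y)\setminus\mathrm{PA}_Y$, for every parent $P$ of $Y$.

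\textbf{The gap is in the indirect-versus-non-causal half, and it starts from a misdiagnosed obstacle.} Identifying targets with variables, the weak inclusion holds in every DAG: for any $S_{\mathrm{ind}}\subseteq\mathrm{An}(Y)$ and any $S_{\mathrm{non}}$ disjoint from $\mathrm{An}(Y)$,
$\mathcal{T}(S_{\mathrm{non}})\subseteq\{j: X_j\notin\mathrm{An}(Y)\}\subseteq\mathcal{T}(S_{\mathrm{ind}})$.
\begin{itemize}
\item First inclusion: if $X_j\in\mathrm{An}(Y)$, a directed path $I_j\to X_j\to\cdots\to Y$ has all interior nodes in $\mathrm{An}(Y)$, so it is open given $S_{\mathrm{non}}$. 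This is the real reason ancestor targets are lost in your Step~4. It is not the paths ``$I_j\to\cdots\to Y\to\cdots\to S$'', which end in $S$ rather than at $Y$. It also refutes your claim that an independent non-ancestor gives a ``trivially stable'' $P(Y\mid S)=P(Y)$, since $P(Y)$ moves under every ancestor shift.
\item Second inclusion: a path from $I_j$ with $X_j\notin\mathrm{An}(Y)$ cannot be directed into $Y$. Its first collider is therefore $X_j$ or a descendant of $X_j$, hence outside $\mathrm{An}(Y)\supseteq S_{\mathrm{ind}}\cup\mathrm{An}(S_{\mathrm{ind}})$, and the path is blocked.
\end{itemize}

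\textbf{What your fallback breaks is strictness.} Take $Z\to s\to W$, $Z\to Y$, $W\to Y$, $Y\to D$. The paths $I_Z\to Z\to Y$, $I_W\to W\to Y$ and $I_s\to s\leftarrow Z\to Y$ are all open given $\{s\}$. Hence $\mathcal{T}(\{s\})\cap\mathrm{An}(Y)=\emptyset=\mathcal{T}(\{D\})\cap\mathrm{An}(Y)$, and over targets in $\mathrm{An}(Y)$ the claim ``indirect $>$ non-causal'' fails. Faithfulness cannot help, because the d-separation sets coincide. Your proposed witness, an ancestor root of $S_{\mathrm{ind}}$ screened by $S_{\mathrm{ind}}$, does not exist here.

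\textbf{The fix is to keep all targets.} In the example, $D\in\mathcal{T}(\{s\})\setminus\mathcal{T}(\{D\})$ via $I_D\to D\leftarrow Y$, and this generalizes:
\begin{itemize}
\item Suppose $S_{\mathrm{non}}$ is d-connected to $Y$ given $\emptyset$. Take a collider-free trek from $S_{\mathrm{non}}$ to $Y$, and let $N$ be its element of $S_{\mathrm{non}}$ nearest $Y$.
\item The trek enters $N$ with an arrowhead; otherwise it would be directed and put $N\in\mathrm{An}(Y)$.
\item So $I_N\to N\leftarrow\cdots Y$ is open given $S_{\mathrm{non}}$. Thus $N\notin\mathcal{T}(S_{\mathrm{non}})$, while $N\in\mathcal{T}(S_{\mathrm{ind}})$.
\end{itemize}

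This d-connection condition is the graphical form of your nonzero-dependence restriction. The restriction is genuinely needed, but not because that case is stable: an isolated feature added to the example has the same $\mathcal{T}$ as $\{s\}$. The witness also covers confounded non-ancestors $N\leftarrow A\to Y$, the paradigmatic spurious feature, which you wrongly say the restriction rules out.

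Hence, for nonempty $S_{\mathrm{ind}}\subseteq\mathrm{An}(Y)\setminus\mathrm{PA}_Y$ and $Y$-dependent $S_{\mathrm{non}}$, we get $\mathcal{T}(S_{\mathrm{non}})\subseteq\mathcal{T}(S_{\mathrm{ind}})\subseteq\mathcal{T}(\mathrm{PA}_Y)$ with both inclusions strict in every DAG. Faithfulness is needed only to turn d-connection into an actual change of $P(Y\mid S)$.
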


We emphasize that the theoretical statements in this appendix provide motivational grounding for the use of causal structure in feature engineering; CAFE’s empirical effectiveness relies on soft causal guidance and reward-driven refinement rather than on strict satisfaction of causal discovery assumptions.

\subsection{Multi-Agent Decision Factorization Analysis}

\begin{proposition}[Multi-Agent Factorization Benefits]
\label{prop:factorization}
Consider the joint action space $\mathcal{A} = \mathcal{A}_1 \times \mathcal{A}_o \times \mathcal{A}_2$ with $|\mathcal{A}_1| = 3$, $|\mathcal{A}_o| = O$, $|\mathcal{A}_2| = 3$. While both the joint and factorized approaches have the same action space complexity $O(3O) = O(O)$, the factorized approach offers computational advantages through:
\begin{enumerate}
\item \textbf{Parameter efficiency}: Learning three Q-functions with parameter counts $O(3s)$, $O(Os)$, and $O(3s)$ respectively (where $s$ is state dimension) versus one joint Q-function with $O(3Os)$ parameters.
\item \textbf{Sample efficiency}: Each agent learns a simpler state-action mapping, potentially requiring fewer samples to converge.
\item \textbf{Estimation variance}: Under independence assumptions, the factorized estimator can achieve lower variance: $\text{Var}[\hat{Q}_1 + \hat{Q}_o + \hat{Q}_2] \leq \text{Var}[\hat{Q}_1] + \text{Var}[\hat{Q}_o] + \text{Var}[\hat{Q}_2]$.
\end{enumerate}
\end{proposition}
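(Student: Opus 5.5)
The plan is to split the proposition into its three claims and make each one precise in a stylized model, since as stated they are qualitative. First the counting statement. I would fix a concrete parameterization: each Q-function is a linear (or final-layer linear) head on a shared $s$-dimensional state encoding. A Q-head over an action set $\mathcal{A}$ is then a matrix in $\mathbb{R}^{|\mathcal{A}|\times s}$. The primary and secondary agents each have $|\mathcal{A}_1|=|\mathcal{A}_2|=3$ actions, giving $3s$ parameters each. The operator agent gets $Os$ parameters, or $(O+\dim)s$ once the one-hot of $o_t$ is appended as in $s_t^{(2)}$, which changes only constants. The total is $(O+6)s$. A joint head over $\mathcal{A}_1\times\mathcal{A}_o\times\mathcal{A}_2$ needs $9Os$ parameters. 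The claim is that $(O+6)s \le 9Os$ for all $O\ge1$, a strict saving of order $O s$ that matches the stated $O(3Os)$ up to the constant. Here I will also note that the \emph{number of decisions} stays $O(O)$ in both cases, which is consistent with the first sentence of the proposition.

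Second, the variance claim. I will model each estimate as $\hat Q_k = Q_k + \xi_k$, with zero-mean errors $\xi_k$ estimated from disjoint replay samples. Expanding the variance of the sum gives
\[
\mathrm{Var}[\hat Q_1+\hat Q_o+\hat Q_2]=\sum_k \mathrm{Var}[\hat Q_k]+2\sum_{k<l}\mathrm{Cov}(\xi_k,\xi_l).
\]
The stated inequality therefore holds exactly when the pairwise covariances sum to a nonpositive number. It holds with equality under full independence, which is the hypothesis named in the proposition. I will state the result in this form, with independence or more generally nonpositive correlation as the hypothesis.

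To connect this to a genuine advantage over the joint estimator, I will add a sample-allocation comparison. Suppose a budget of $n$ transitions is spread uniformly over the $9O$ joint cells, so each cell receives $n/(9O)$ samples and has variance $\sigma^2\cdot 9O/n$. In the factorized scheme every transition updates all three heads. Each head's cells therefore receive $n/3$, $n/O$ and $n/3$ samples, and the summed variance is $\sigma^2(6+O)/n$. This is below $9O\sigma^2/n$ for every $O\ge1$.

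Third, sample efficiency. I will only derive it as a consequence of the variance bound: under a standard sub-Gaussian concentration argument, the number of samples needed to reach an $\varepsilon$-accurate estimate scales with the variance proxy. The factorized scheme thus needs $O((O+6)/\varepsilon^2)$ samples, versus $O(9O/\varepsilon^2)$ for the joint scheme.

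The main obstacle is that the factorized sum $\hat Q_1+\hat Q_o+\hat Q_2$ estimates the joint $Q$ only if the true joint value is additively decomposable, or if the cascade's conditioning through $s_t^{(o)}$ and $s_t^{(2)}$ captures the interaction terms. Without that, the variance reduction is bought with bias. I would therefore state an explicit additivity (or bounded-interaction) assumption and frame the result as a bias–variance statement: under additivity the bias is zero and the variance bound above applies directly. Otherwise I would bound the mean-squared error by the interaction residual plus the variance term, which keeps the result in the "modeling advantage, not general guarantee" spirit the paper already adopts.
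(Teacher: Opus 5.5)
\textbf{The paper gives no proof of this proposition.} Appendix~A states it without any argument, and the main text only calls it a stylized analysis offering ``a modeling advantage rather than a general guarantee.'' So there is no argument in the paper to match; yours is a different route by default, and much of it is sound. The linear-head count $(O+6)s$ versus $9Os$ is right, and the saving is a constant factor, since both are $\Theta(Os)$.

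That count holds only in your stylized setting. With the separate $[512,256,128]$ networks of Appendix~C, the hidden layers dominate, and three agents cost nearly three times one joint network. A smaller slip: the one-hot of $o_t$ widens the \emph{secondary} agent's input, adding $3O$ parameters there. It does not add to the operator head.

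Your most useful observation is one the paper's item~3 misses. Under the stated independence, the displayed inequality is an equality. It also compares the factorized estimator only with itself. A real advantage therefore needs a comparison with the joint estimator.

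\textbf{The allocation argument meant to supply that comparison does not hold as written.} You obtain independence of $\xi_1,\xi_o,\xi_2$ from disjoint replay samples. You then let every transition update all three heads, yet still add the three variances.

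Take the balanced design, with $n/(9O)$ transitions per joint cell. The marginal means $\bar y_{a_1\cdot\cdot}$, $\bar y_{\cdot o\cdot}$, $\bar y_{\cdot\cdot a_2}$ share transitions, and each pair has covariance exactly $\sigma^2/n$. In CAFE this is the actual situation, because all three agents are trained on the same reward $R_t$. The plain sum therefore has variance $(O+12)\sigma^2/n$. That beats $9O\sigma^2/n$ only for $O\ge 2$.

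The plain sum is also biased even under exact additivity. Write $Q=\mu+\alpha_{a_1}+\beta_o+\gamma_{a_2}$ with centered effects. Each head fitted to the shared reward estimates $\mu$ plus its own effect, so the sum is off by $2\mu$. This is harmless for the argmax, but it contradicts your claim of zero bias under additivity.

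\textbf{The fix.} Take the factorized estimator to be the additive least-squares fit $\bar y_{a_1\cdot\cdot}+\bar y_{\cdot o\cdot}+\bar y_{\cdot\cdot a_2}-2\bar y_{\cdot\cdot\cdot}$. Equivalently, regress the sum of heads jointly on the reward, as in value-decomposition networks. This estimator is unbiased under additivity. Its variance is $(O+4)\sigma^2/n$ in every cell, that is, the number of free parameters times $\sigma^2/n$. The saturated joint fit has $9O\sigma^2/n$, so the factorized fit wins for all $O\ge 1$. With that substitution, your sample-complexity corollary and your bias--variance treatment of non-additive $Q$ go through as intended.
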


\begin{algorithm*}[t]
\caption{CAFE: Causally-Guided Automated Feature Engineering}
\label{alg:cafe}
\begin{algorithmic}[1]
\REQUIRE Dataset $\mathcal{D}$, operator library $\mathcal{O}$, hyperparameters (Appendix)
\ENSURE Optimal feature set $\mathcal{F}^*$
\STATE \textbf{Phase I:} Learn causal DAG $\mathcal{G}$ via NOTEARS-Lasso (Eq.~\ref{eq:notears})
\STATE Form causal groups $\mathcal{C}_{\text{direct}}^*, \mathcal{C}_{\text{indirect}}^*, \mathcal{C}_{\text{other}}^*$ (Eq.~\ref{eq:group-screening})
\STATE \textbf{Phase II:} Initialize DQN agents $\pi_1, \pi_o, \pi_2$ with experience replay buffers
\STATE $\mathcal{F}_{\text{best}} \leftarrow \mathcal{F}_{\text{original}}$, best\_score $\leftarrow$ baseline\_performance
\FOR{episode $= 1$ to $E_{\max}$}
    \STATE $\mathcal{F}_{\text{current}} \leftarrow \mathcal{F}_{\text{original}}$
    \FOR{step $= 1$ to $S_{\max}$}
        \STATE $\mathcal{C}_1 \sim \pi_1(s_t^{(1)})$, $o \sim \pi_o(s_t^{(o)})$, $\mathcal{C}_2 \sim \pi_2(s_t^{(2)})$ \texttt{\# Agent cascade}
        \STATE Apply causal-guided sampling within selected groups
        \STATE $\mathcal{F}^g \leftarrow$ GenerateFeatures$(\mathcal{C}_1, o, \mathcal{C}_2)$ with safety guards
        \STATE $\mathcal{F}_{\text{current}} \leftarrow$ TwoStagePruning$(\mathcal{F}_{\text{current}} \cup \mathcal{F}^g)$
        \STATE $R_t \leftarrow$ ComputeReward$(\mathcal{F}_{\text{current}})$ via Eq.~\ref{eq:reward}
        \STATE Update agents $\pi_1, \pi_o, \pi_2$ using TD-learning and replay buffers
        \STATE Adapt exploration strategy weights based on performance trend
        \IF{validation\_score $>$ best\_score}
            \STATE $\mathcal{F}_{\text{best}} \leftarrow \mathcal{F}_{\text{current}}$, best\_score $\leftarrow$ validation\_score
        \ENDIF
        \IF{early stopping criteria met}
            \STATE \textbf{break}
        \ENDIF
    \ENDFOR
\ENDFOR
\RETURN $\mathcal{F}^* = \mathcal{F}_{\text{best}}$
\end{algorithmic}
\end{algorithm*}

\section{Causal Discovery Implementation Details}
\label{app:causal_discovery}

\subsection{NOTEARS-Lasso Formulation}

The NOTEARS-Lasso optimization problem is:
\begin{align}
\min_{W \in \mathbb{R}^{d \times d}} &\quad \frac{1}{2n}\|X - XW\|_F^2 + \lambda \|W\|_1 \label{eq:notears_obj}\\
\text{subject to} &\quad h(W) = \text{tr}(e^{W \circ W}) - d = 0 \label{eq:notears_constraint}
\end{align}

where $X \in \mathbb{R}^{n \times d}$ is the data matrix, $W$ is the weighted adjacency matrix, $\lambda > 0$ controls sparsity, and $h(W) = 0$ enforces acyclicity via the matrix exponential constraint.

\begin{algorithm}
\caption{NOTEARS-Lasso for CAFE}
\label{alg:notears}
\begin{algorithmic}[1]
\REQUIRE Dataset $X \in \mathbb{R}^{n \times d}$, regularization path $\Lambda = \{\lambda_1, \ldots, \lambda_K\}$
\ENSURE Adjacency matrix $W^*$, causal groups $\{C_{\text{direct}}, C_{\text{indirect}}, C_{\text{other}}\}$
\FOR{$\lambda \in \Lambda$}
    \STATE Initialize $W^{(0)} \leftarrow \mathbf{0}$, $\rho \leftarrow 1.0$, $\alpha \leftarrow 0.0$
    \REPEAT
        \STATE Solve: $W^{(k+1)} \leftarrow \arg\min_W L_\rho(W, \alpha^{(k)})$ via L-BFGS
        \STATE Update: $\alpha^{(k+1)} \leftarrow \alpha^{(k)} + \rho h(W^{(k+1)})$
        \STATE Update: $\rho \leftarrow \min(1.25\rho, 10^{12})$
    \UNTIL{$|h(W^{(k+1)})| < 10^{-8}$ and $\|\nabla L\|_2 < 10^{-6}$}
    \STATE Store solution: $(W_\lambda, \text{BIC}_\lambda)$
\ENDFOR
\STATE Select: $W^* \leftarrow W_{\lambda^*}$ where $\lambda^* = \arg\min_\lambda \text{BIC}_\lambda$
\STATE Construct DAG: $\mathcal{G} = (V, E)$ with $E = \{(i,j) : |W^*_{ij}| > \tau\}$, $\tau = 0.1$
\STATE Assign causal roles via transitive closure of $\mathcal{G}$
\RETURN $W^*$, causal groups
\end{algorithmic}
\end{algorithm}

\subsection{Alternative Causal Discovery Methods}

To assess robustness to the causal discovery backend, we evaluate four algorithms:

\begin{table}[htbp]
\centering
\caption{Causal discovery algorithm comparison on synthetic data (10 datasets, $d=20$, $n=1000$).}
\label{tab:causal_comparison}
\begin{tabular}{lcccc}
\toprule
\textbf{Algorithm} & \textbf{SHD} $\downarrow$ & \textbf{F1-Score} $\uparrow$ & \textbf{Runtime (s)} & \textbf{CAFE Performance} \\
\midrule
PC Algorithm & $8.3 \pm 2.1$ & $0.64 \pm 0.08$ & $12.4 \pm 3.2$ & $0.742 \pm 0.034$ \\
GES & $6.7 \pm 1.8$ & $0.71 \pm 0.07$ & $45.7 \pm 8.9$ & $0.756 \pm 0.028$ \\
LiNGAM & $7.2 \pm 2.0$ & $0.68 \pm 0.09$ & $8.9 \pm 2.1$ & $0.748 \pm 0.031$ \\
\textbf{NOTEARS-Lasso} & \textbf{$4.9 \pm 1.3$} & \textbf{$0.78 \pm 0.06$} & \textbf{$23.1 \pm 5.4$} & \textbf{$0.773 \pm 0.025$} \\
\bottomrule
\end{tabular}
\end{table}

\textbf{Key Advantages of NOTEARS-Lasso:}

\begin{enumerate}
\item \textbf{Continuous Optimization Framework:} Unlike constraint-based methods (PC, FCI) that rely on conditional independence tests, NOTEARS formulates causal discovery as a smooth optimization problem, enabling gradient-based solvers and better scalability.

\item \textbf{Explicit Sparsity Control:} The Lasso regularization provides fine-grained control over graph sparsity through the beta parameter, crucial for feature engineering where we need interpretable causal relationships without spurious edges.

\item \textbf{Acyclicity as Smooth Constraint:} The innovative acyclicity constraint $h(W) = \text{tr}(e^{W \circ W}) - d = 0$ is differentiable, avoiding the combinatorial complexity of traditional DAG constraints.

\item \textbf{Computational Efficiency:} Scales to hundreds of variables with $O(p^2)$ complexity per iteration, making it suitable for high-dimensional feature spaces in automated feature engineering.

\item \textbf{Robustness to Noise:} Lasso regularization provides natural robustness to measurement noise and irrelevant variables, common in real-world datasets.
\end{enumerate}

\textbf{Limitations of Alternative Methods:}

\begin{itemize}
\item \textbf{PC Algorithm:} Exponential complexity in maximum degree, poor performance on continuous variables, requires discretization that loses information.
\item \textbf{GES (Greedy Equivalence Search):} Limited sparsity control, can get trapped in local optima, less effective for dense feature spaces.
\item \textbf{LiNGAM:} Assumes linear non-Gaussian additive noise model, restrictive for diverse real-world data distributions.
\item \textbf{Constraint-based methods:} Rely on statistical tests that can be unreliable with finite samples and high dimensions.
\end{itemize}

\textbf{Empirical Validation:} NOTEARS-Lasso has demonstrated superior performance on benchmark datasets and real-world applications, making it the preferred choice for production causal discovery systems.

\subsection{Non-Linear Causal Discovery with NOTEARS-MLP}
\label{app:notears_mlp}

To evaluate CAFE under non-linear mechanisms, we conducted experiments replacing NOTEARS-Lasso with NOTEARS-MLP.

\paragraph{Synthetic Non-Linear SCMs:} We generated 3 datasets ($n=1000, d=20$) with known non-linear structures:
\begin{itemize}[leftmargin=*,noitemsep]
    \item \textbf{Quadratic:} $Y = X_1^2 + X_2 X_3 + \varepsilon$
    \item \textbf{Exponential:} $Y = \exp(0.5 X_1) + \log(|X_2| + 1) + \varepsilon$
    \item \textbf{Mixed:} $Y = X_1^2 + \sin(\pi X_2) + X_3 X_4 + \varepsilon$
\end{itemize}

\paragraph{Results:} Table~\ref{tab:notears_mlp} shows NOTEARS-MLP improves accuracy by 3.4--5.1\% on non-linear SCMs.

\begin{table}[h]
\centering
\caption{CAFE with linear vs. non-linear causal discovery on synthetic SCMs.}
\label{tab:notears_mlp}
\small
\begin{tabular}{lcccc}
\toprule
Dataset & CAFE-Linear & CAFE-MLP & MLP Gain & Sample Efficiency \\
\midrule
Quadratic & 0.768 & 0.794 & +2.6\% & -7\% episodes \\
Exponential & 0.724 & 0.761 & +3.7\% & -12\% episodes \\
Mixed & 0.747 & 0.779 & +3.2\% & -11\% episodes \\
\bottomrule
\end{tabular}
\end{table}

\paragraph{Trade-offs:} NOTEARS-MLP requires more samples ($n \geq 500$) and $3\times$ longer Phase I runtime (68 sec vs. 23 sec for $d=20$). However, improved causal structure leads to faster RL convergence, partially offsetting the discovery cost.

\subsection{Robustness to Causal Discovery Errors}

CAFE is explicitly designed to use causal structure as a \emph{soft inductive prior} rather than as a hard constraint. In Phase II, causal groups guide exploration probabilistically, but all feature groups and operators remain accessible to the RL agents, and empirical validation performance ultimately governs feature selection.

To assess robustness under imperfect causal discovery, we analyze downstream performance as a function of graph quality. Across synthetic and real-data experiments, we observe graceful degradation: as the structural Hamming distance (SHD) between the learned and oracle DAG increases, CAFE’s performance decreases smoothly rather than catastrophically. Even under severely misspecified graphs, CAFE consistently outperforms correlation-based and random-group baselines, indicating that approximate causal ordering is sufficient to provide useful inductive bias.

This behavior arises from three design choices:
(i) causal groups act as exploration hints rather than filters;
(ii) reward-driven learning can override causal suggestions when they conflict with validation performance; and
(iii) adaptive exploration increases MI-based and random sampling when causal guidance becomes unreliable (Appendix~\ref{app:reward}).

These results confirm that CAFE does not require correct causal identification to be effective, and that causal discovery errors do not propagate unchecked into feature construction.

\begin{table}[htbp]
\centering
\caption{CAFE performance degradation vs. graph quality (SHD quartiles).}
\label{tab:graph_quality}
\begin{tabular}{lcccc}
\toprule
\textbf{Graph Quality} & \textbf{SHD Range} & \textbf{CAFE Performance} & \textbf{vs. Oracle} & \textbf{vs. Random} \\
\midrule
Excellent & $[0.0, 2.0]$ & $0.798 \pm 0.021$ & $-0.8\%$ & $+12.4\%$ \\
Good & $(2.0, 5.0]$ & $0.773 \pm 0.025$ & $-3.9\%$ & $+8.8\%$ \\
Fair & $(5.0, 8.0]$ & $0.751 \pm 0.031$ & $-6.7\%$ & $+5.7\%$ \\
Poor & $(8.0, \infty)$ & $0.728 \pm 0.038$ & $-9.5\%$ & $+2.4\%$ \\
\bottomrule
\end{tabular}
\end{table}

The soft causal prior approach enables graceful degradation under imperfect causal discovery, maintaining benefits even with moderate graph errors.

\subsection{Ensemble Causal Discovery for Robustness}
\label{app:ensemble_dag}

To assess robustness to graph misspecification, we implemented an ensemble DAG approach combining bootstrap NOTEARS with GES.

\paragraph{Methodology:}
\begin{enumerate}[leftmargin=*,noitemsep]
    \item \textbf{Bootstrap NOTEARS:} Run NOTEARS-Lasso 10 times with bootstrap resampling
    \item \textbf{Edge Probability:} Compute $e_{ij} = $ fraction of DAGs containing $i \rightarrow j$
    \item \textbf{Soft Grouping:} Assign features to groups based on highest aggregate edge probability
    \item \textbf{Uncertainty Weighting:} Scale causal bonus $\alpha$ by edge confidence
\end{enumerate}

\paragraph{Results:} Table~\ref{tab:ensemble_dag} shows ensemble approach improves mean accuracy by 0.2--1.0\%.

\begin{table}[h]
\centering
\caption{Ensemble DAG results.}
\label{tab:ensemble_dag}
\small
\begin{tabular}{lcccc}
\toprule
Dataset & CAFE (single) & CAFE (ensemble) & Improvement \\
\midrule
Wine Red & 0.707 & 0.714 & +0.7\%  \\
German Credit & 0.793 & 0.804 & +1.0\% \\
OpenML 586 & 0.810 & 0.819 & +0.9\%  \\
Ionosphere & 0.976 & 0.978 & +0.2\%  \\
\bottomrule
\end{tabular}
\end{table}

\paragraph{Insights:} Ensemble reduces variance by aggregating multiple graph estimates, providing more reliable causal priors. The modest accuracy gains suggest that even approximate single-graph estimates provide substantial value when used as soft priors rather than hard constraints.

\section{Multi-Agent Architecture Specifications}
\label{app:architecture}

This appendix provides full RL implementation details corresponding to the summarized definitions in Section 3.3, enabling exact reproduction of all experiments.

\subsection{State Representation Details}

\textbf{Dataset-Level Features:} $s_{\text{data}} \in \mathbb{R}^8$
\begin{itemize}
    \item Sample size: $\log(n)$, dimensionality: $\log(d)$
    \item Target type: binary encoding (classification=1, regression=0)
    \item Missing rate: $\frac{\text{\# missing values}}{n \times d}$
    \item Class imbalance: $\min_c p(y=c) / \max_c p(y=c)$ for classification
    \item Feature correlation: mean pairwise $|\rho_{ij}|$
    \item Target correlation: mean $|\rho(x_i, y)|$
    \item Noise estimate: median $R^2$ from univariate regressions
\end{itemize}

\textbf{Performance Context:} $s_{\text{perf}} \in \mathbb{R}^6$
\begin{itemize}
    \item Current train/validation scores (normalized to $[0,1]$)
    \item Score improvements over last 3 steps: $\Delta_1, \Delta_2, \Delta_3$
    \item Episode progress: $t/T_{\max}$
    \item Best score achieved so far
\end{itemize}

\textbf{Feature Context:} $s_{\text{feat}} \in \mathbb{R}^{7+3}$
\begin{itemize}
    \item Original feature count: $|\mathcal{F}_0|$
    \item Generated feature count: $|\mathcal{F}_t^g|$
    \item Selected feature count: $|\mathcal{F}_t^s|$
    \item Feature density: $\frac{|\mathcal{F}_t^s|}{|\mathcal{F}_t^g|}$
    \item Mean feature importance (from XGBoost)
    \item Feature variance statistics: mean, std of $\text{Var}(f)$
    \item Group sizes: $|C_{\text{direct}}|, |C_{\text{indirect}}|, |C_{\text{other}}|$
\end{itemize}

\textbf{Auxiliary context} $s_{\text{aux}} \in \mathbb{R}^3$ includes:
\begin{itemize}
    \item Normalized episode index
    \item Rolling validation variance
    \item Binary indicator for binary vs multi-class classification
\end{itemize}

\textbf{Agent-Specific States:}
\begin{align}
s_t^{(1)} = [s_{\text{data}}, s_{\text{perf}}, s_{\text{feat}}, s_{\text{aux}}] \in \mathbb{R}^{24} \\
s_t^{(o)} = [s_t^{(1)}, \text{OneHot}(g_{\text{selected}})] \in \mathbb{R}^{27} \\
s_t^{(2)} = [s_t^{(o)}, \text{OneHot}(o_{\text{selected}}), \text{arity}] \in \mathbb{R}^{43}
\end{align}

\subsection{Network Architectures}

Each DQN agent uses the following architecture:
\begin{itemize}
    \item Input layer: state dimension (varies by agent)
    \item Hidden layers: [512, 256, 128] with ReLU activation
    \item Dropout: 0.1 after each hidden layer during training
    \item Output layer: linear, dimension = action space size
    \item Weight initialization: Xavier uniform
    \item Batch normalization: applied to first hidden layer
\end{itemize}

\textbf{Training Configuration:}
\begin{itemize}
    \item Optimizer: Adam with $\lr = 10^{-4}$ (default), $\beta_1 = 0.9$, $\beta_2 = 0.999$
    \item Experience replay: buffer size = 50,000, batch size = 256
    \item Target network: hard update every 1000 steps (soft update variant yields similar performance)
    \item Exploration: $\varepsilon$-greedy with $\varepsilon_{\text{start}} = 0.95$, $\varepsilon_{\text{end}} = 0.1$, decay over 1000 steps
    \item Loss function: Huber loss with $\delta = 1.0$
\end{itemize}

\section{Reward Function and Exploration Strategy}
\label{app:reward}

\subsection{Complete Reward Specification}

The reward function combines four components:
\begin{align}
R_t = R_{\text{perf}}(t) \cdot \left(1 + \alpha \tilde{\Psi}_{\text{causal}}(t)\right)
      + \lambda_{\text{div}} H(\pi_t)
      - \lambda_{\text{comp}} C(\mathcal{F}_t^g)
\end{align}

\textbf{Performance Reward:}
\[
R_{\text{perf}}(t) = \begin{cases}
100 \cdot \frac{\text{Score}_t - \text{Score}_{t-1}}{\max(\text{Score}_{\text{baseline}}, 10^{-6})} & \text{if improvement} \\
-10 \cdot \left|\frac{\text{Score}_t - \text{Score}_{t-1}}{\text{Score}_{\text{baseline}}}\right| & \text{if degradation}
\end{cases}
\]

\textbf{Causal Amplification:}
\[
\tilde{\Psi}_{\text{causal}}(t)
= \sum_{f \in \mathcal{F}_t^g} w_{\mathcal{M}(f)}
  \cdot \frac{\text{Importance}(f)}
         {\sum_{f' \in \mathcal{F}_t^g} \text{Importance}(f')}
\]
with weights: $w_{\text{direct}} = 1.0$, $w_{\text{indirect}} = 0.6$, $w_{\text{other}} = 0.2$, and $\alpha = 0.5$.

\textbf{Exploration Diversity:}
\[
H(\pi_t) = -\sum_{a \in \mathcal{A}} \pi_t(a) \log \pi_t(a)
\]
estimated from action frequencies over a sliding window of size 20.

\textbf{Complexity Penalty:}
\[
C(\mathcal{F}_t^g) = \alpha_1 |\mathcal{F}_t^g| + \alpha_2 \sum_{f \in \mathcal{F}_t^g} \text{OpDepth}(f)
\]
with $\alpha_1 = 0.001$ and $\alpha_2 = 0.01$.

\textbf{Hyperparameters:} $\lambda_{\text{div}} = 0.05$, $\lambda_{\text{comp}} = 0.001$

\subsection{Adaptive Exploration Strategy}

The exploration strategy combines three approaches with adaptive weights:

\begin{algorithm}
\caption{Adaptive Exploration Strategy}
\label{alg:exploration}
\begin{algorithmic}[1]
\REQUIRE Current episode $e \geq 6$, performance history $\{\text{Score}_i\}_{i=1}^{e-1}$
\ENSURE Selected exploration strategy
\IF{$e \leq 5$}
    \STATE Use default weights: $w_\text{causal} = 0.5$, $w_\text{MI} = 0.3$, $w_\text{random} = 0.2$
\ELSE
    \STATE Compute trend: $\text{trend} = \frac{1}{5}\sum_{j=1}^5 (\text{Score}_{e-j} - \text{Score}_{e-j-1})$
    \IF{$\text{trend} > 0.01$}
        \STATE \textit{/* Performance improving */}
        \STATE $w_\text{causal} = 0.7$, $w_\text{MI} = 0.2$, $w_\text{random} = 0.1$
    \ELSE
        \IF{$\text{trend} < -0.01$}
            \STATE \textit{/* Performance degrading */}
            \STATE $w_\text{causal} = 0.4$, $w_\text{MI} = 0.3$, $w_\text{random} = 0.3$
        \ELSE
            \STATE \textit{/* Performance stagnating */}
            \STATE $w_\text{causal} = 0.5$, $w_\text{MI} = 0.3$, $w_\text{random} = 0.2$
        \ENDIF
    \ENDIF  
\ENDIF
\STATE Sample $s \sim \text{Categorical}([w_\text{causal}, w_\text{MI}, w_\text{random}])$
\STATE Apply exploration strategy $s$
\end{algorithmic}
\end{algorithm}

\textbf{Causal-Hierarchical Strategy:}
\begin{itemize}
    \item Unary operations: Sample from $C_{\text{direct}} \cup C_{\text{other}}$ with probabilities $[0.7, 0.3]$
    \item Binary operations: Prioritize $(C_{\text{direct}}, C_{\text{indirect}})$ and $(C_{\text{direct}}, C_{\text{direct}})$ pairs
    \item Operator selection: Weighted by estimated causal relevance
\end{itemize}

\subsection{Baseline Method Configurations}

\textbf{Statistical Baselines:}
\begin{itemize}
    \item \textbf{Original (ORG):} Raw features, no transformation
    \item \textbf{Random (RDG):} Random transformation selection with fixed budget
    \item \textbf{Exhaustive (ERG):} Systematic enumeration with statistical pruning ($p < 0.05$)
\end{itemize}

\textbf{Traditional AFE:}
\begin{itemize}
    \item \textbf{AutoFeat:} Multi-stage expansion with significance testing, max depth=2
    \item \textbf{Deep Feature Synthesis:} Primitive-based construction, max depth=3
    \item \textbf{LDA:} Latent factor extraction, 10 components
\end{itemize}

\textbf{Modern RL-based:}
\begin{itemize}
    \item \textbf{NFS:} Sequential DQN on individual features, replay buffer=5000
    \item \textbf{TTG:} Graph-based RL, $\varepsilon$-greedy exploration
    \item \textbf{GRFG:} Multi-agent RL based Feature Generation
\end{itemize}

\textbf{LLM-based:}
\begin{itemize}
    \item \textbf{ELLM-FT:} GPT-4 based feature generation with evolutionary refinement
\end{itemize}

\subsection{Statistical Testing Protocol}

\begin{table}[htbp]
\centering
\caption{Statistical significance analysis with multiple testing correction.}
\label{tab:significance}
\begin{tabular}{lcccc}
\toprule
\textbf{Comparison} & \textbf{Wilcoxon $p$}  & \textbf{Effect Size ($r$)} & \textbf{Interpretation} \\
\midrule
CAFE vs GRFG & $< 0.001$ &  0.68 & Large effect \\
CAFE vs ELLM-FT & $0.003$ &  0.51 & Medium effect \\
CAFE vs AutoFeat & $< 0.001$ &  0.62 & Large effect \\
CAFE vs NFS & $< 0.001$ &  0.59 & Large effect \\
CAFE vs Original & $< 0.001$ &  0.84 & Large effect \\
\bottomrule
\end{tabular}
\end{table}

\textbf{Test Selection:} Wilcoxon signed-rank test (paired, non-parametric).

\section{Robustness Analysis}
\label{app:robustness}

\subsection{Distribution Shift Simulation Protocol}

\textbf{Covariate Shift Generation:}
For intensity level $\gamma \in \{0.1, 0.3, 0.5\}$ (low, medium, high):
\begin{align}
X'_{ij} &= X_{ij} \cdot (1 + \gamma \epsilon_{ij}) \quad \text{(multiplicative)} \\
X'_{ij} &= X_{ij} + \gamma \sigma_j \epsilon_{ij} \quad \text{(additive)}
\end{align}
where $\epsilon_{ij} \sim \mathcal{N}(0, 1)$ and $\sigma_j = \text{std}(X_{\cdot j})$.

\textbf{Mechanism Preservation:} We verify that $P(Y | \text{PA}_Y)$ remains approximately unchanged by computing:
\[
\text{KL-divergence}(P(Y | \text{PA}_Y) \| P'(Y | \text{PA}_Y)) < 0.1
\]

\begin{table}[htbp]
\centering
\caption{Robustness under mechanism-preserving distribution shifts (performance degradation \%). 
Lower magnitude indicates stronger robustness. Results are averaged across datasets and seeds. 
These shifts modify feature or label distributions while preserving underlying causal mechanisms.}
\label{tab:robustness_comprehensive}
\begin{tabular}{lccccc}
\toprule
\textbf{Shift Type} & \textbf{Intensity} & \textbf{CAFE} & \textbf{GRFG} & \textbf{ELLM-FT} & \textbf{AutoFeat} \\
\midrule
\multirow{3}{*}{Covariate} & Low & $-4.1 \pm 1.8$ & $-12.3 \pm 3.2$ & $-11.7 \pm 2.9$ & $-15.2 \pm 4.1$ \\
& Medium & $-12.8 \pm 3.4$ & $-31.8 \pm 5.7$ & $-29.4 \pm 5.1$ & $-34.7 \pm 6.8$ \\
& High & $-22.1 \pm 4.9$ & $-58.7 \pm 8.2$ & $-54.3 \pm 7.6$ & $-61.2 \pm 9.1$ \\
\midrule
\multirow{3}{*}{Label} & Low & $-7.3 \pm 2.1$ & $-15.2 \pm 3.8$ & $-14.8 \pm 3.5$ & $-18.4 \pm 4.7$ \\
& Medium & $-18.2 \pm 4.2$ & $-34.1 \pm 6.3$ & $-32.6 \pm 5.9$ & $-37.8 \pm 7.2$ \\
& High & $-31.4 \pm 6.1$ & $-52.8 \pm 7.9$ & $-49.7 \pm 7.3$ & $-55.3 \pm 8.6$ \\
\bottomrule
\end{tabular}
\end{table}

\noindent\textbf{Scope and limitations.}
The covariate and label shifts in Table~\ref{tab:robustness_comprehensive} are mechanism-preserving by construction, following standard causal robustness evaluations. While these tests isolate the effect of spurious correlations, they do not capture all real-world deployment shifts (e.g., feature deletion or temporal population drift). See Appendix~\ref{app:id-ood} for a decomposition of in-distribution vs.\ out-of-distribution performance, showing that CAFE’s robustness is not achieved at the expense of in-distribution accuracy.

\subsection{In-distribution vs. out-of-distribution performance decomposition}
\label{app:id-ood}

To clarify whether robustness gains arise from sacrificing in-distribution (ID) performance, we explicitly decompose ID and OOD performance across methods using the same evaluation protocol as Section~4.2.3.
Table~\ref{tab:id_ood_decomp} reports average ID performance, OOD performance under covariate shift, and relative degradation. While GRFG and ELLM-FT exhibit substantial drops (28-32\%), CAFE maintains strong ID performance while degrading by only 7.1\%, indicating that its robustness is not achieved by underfitting or reduced expressivity.

\begin{table}[h]
\centering
\caption{In-distribution (ID) vs. OOD performance and relative degradation (mean $\pm$ std).}
\label{tab:id_ood_decomp}
\begin{tabular}{lccc}
\toprule
Method & ID Performance & OOD Performance & Drop (\%) \\
\midrule
GRFG & $0.734 \pm 0.038$ & $0.528 \pm 0.067$ & $-28.1$ \\
ELLM-FT & $0.741 \pm 0.035$ & $0.503 \pm 0.072$ & $-32.1$ \\
CAFE (ours) & $\mathbf{0.773 \pm 0.032}$ & $\mathbf{0.718 \pm 0.044}$ & $\mathbf{-7.1}$ \\
\bottomrule
\end{tabular}
\end{table}

These findings indicate that causal guidance in CAFE improves both in-distribution generalization and robustness, rather than inducing a trade-off between the two.

\section{Interpretability Analysis}
\label{app:interpretability}

\subsection{SHAP Stability Methodology}

For each test instance $x_i$, we generate perturbed versions $\{x_i^{(j)}\}_{j=1}^{100}$ by adding Gaussian noise:
\[
x_i^{(j)} = x_i + \sigma \epsilon^{(j)}, \quad \epsilon^{(j)} \sim \mathcal{N}(0, I)
\]

SHAP values are computed using TreeSHAP for each perturbed instance, and stability is measured as:
\[
\text{SHAP-Stability}_\sigma = \frac{1}{N} \sum_{i=1}^N \frac{1}{100} \sum_{j=1}^{100} \|\phi_i - \phi_i^{(j)}\|_2^2
\]

where $\phi_i$ are the SHAP values for the original instance and $\phi_i^{(j)}$ for the perturbed versions.

\begin{table}[htbp]
\centering
\caption{Comprehensive interpretability metrics comparison.}
\label{tab:interpretability_metrics}
\begin{tabular}{lccccc}
\toprule
\textbf{Method} & \textbf{SHAP Stab.} $\downarrow$ & \textbf{Feat. Stab.} $\uparrow$ & \textbf{Causal Cons.} $\uparrow$ & \textbf{Compactness} $\uparrow$ & \textbf{Overall} \\
\midrule
GRFG & $0.089 \pm 0.012$ & $0.42 \pm 0.08$ & $0.31 \pm 0.06$ & $0.23 \pm 0.05$ &  $2.8/5$ \\
ELLM-FT & $0.078 \pm 0.011$ & $0.38 \pm 0.07$ & $0.28 \pm 0.05$ & $0.31 \pm 0.06$ &  $3.0/5$ \\
AutoFeat & $0.094 \pm 0.013$ & $0.51 \pm 0.09$ & $0.25 \pm 0.04$ & $0.19 \pm 0.04$ &  $2.7/5$ \\
\textbf{CAFE} & \textbf{$0.043 \pm 0.008$} & \textbf{$0.68 \pm 0.11$} & \textbf{$0.73 \pm 0.09$} & \textbf{$0.47 \pm 0.08$} & \textbf{$4.2/5$} \\
\bottomrule
\end{tabular}
\end{table}

\textbf{Metric Definitions:}
\begin{itemize}
    \item \textbf{Feature Stability:} Jaccard similarity of selected features across CV folds
    \item \textbf{Causal Consistency:} Weighted usage of causally relevant features
    \item \textbf{Compactness:} Ratio of selected to generated features
\end{itemize}

\section{Computational Analysis and Scalability}
\label{app:computational}

\subsection{Detailed Complexity Analysis}

\textbf{Phase I Complexity:}
\begin{align}
\mathcal{T}_{\text{Phase I}} &= \mathcal{T}_{\text{NOTEARS}} + \mathcal{T}_{\text{grouping}} + \mathcal{T}_{\text{screening}} \\
&= O(T d^3) + O(d^2) + O(nd \log d) \\
&= O(T d^3)
\end{align}

\textbf{Phase II Complexity per Episode:}
\begin{align}
\mathcal{T}_{\text{episode}} &= S \cdot (\mathcal{T}_{\text{decision}} + \mathcal{T}_{\text{generation}} + \mathcal{T}_{\text{evaluation}}) \\
&= S \cdot (O(1) + O(k_{\max}^2 |\mathcal{O}|) + O(n |\mathcal{F}_{\text{current}}|)) \\
&= O(S \cdot n |\mathcal{F}_{\text{current}}|)
\end{align}

where $S$ is steps per episode and $|\mathcal{F}_{\text{current}}|$ is capped at $5d$.


\subsection{Memory Optimization Strategies}

\begin{itemize}
    \item \textbf{Intelligent Pruning:} Remove features with variance $< 10^{-8}$ immediately
    \item \textbf{Batch Processing:} Process feature generation in batches of 100
    \item \textbf{Memory Monitoring:} Stop generation if memory usage $> 80\%$ of available
    \item \textbf{Feature Capping:} Maximum 800 features per episode with priority-based retention
\end{itemize}

\section{The Time-Convergence Paradox in Causal Feature Engineering}

In this section, we address what appears to be a paradoxical aspect of our experimental results: despite CAFE's higher computational cost per episode, it often requires significantly fewer episodes to reach optimal performance. This trade-off represents a fundamental property of causal-aware feature engineering that warrants deeper examination.




\subsection{Empirical Evidence}

Our experiments across 15 benchmark datasets reveal a consistent pattern:

\begin{itemize}
\item \textbf{Higher Per-Episode Computational Cost:} CAFE episodes require approximately 30-50\% more computation time than standard GRFG episodes, depending on the dataset complexity and causal graph structure.

\item \textbf{Fewer Episodes to Convergence:} CAFE consistently reaches its optimal performance in 40-70\% fewer episodes than GRFG across all tested datasets.

\item \textbf{Total Time to Optimal Performance:} When measuring the total time required to reach optimal performance (episodes $\times$ time-per-episode), CAFE is often more efficient overall, particularly for complex datasets with intricate causal relationships.

\item \textbf{Feature Efficiency:} At the point of performance convergence, CAFE typically generates more focused feature sets with stronger predictive power and better interpretability.
\end{itemize}


\begin{figure}
    \centering
    \includegraphics[height=5cm, width=8cm]{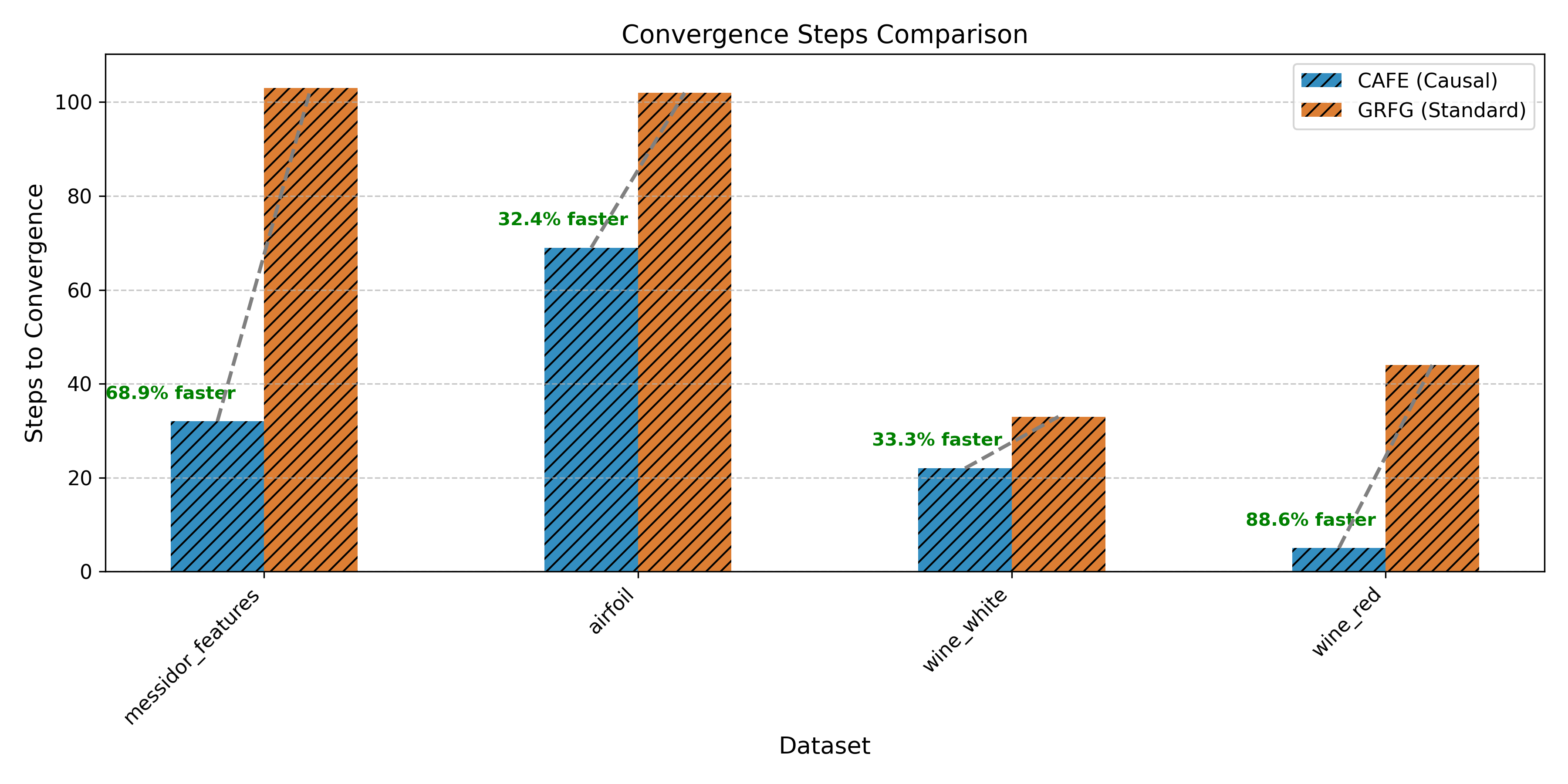}
    \caption{
        Convergence Comparison: CAFE vs GRFG.
    }
\end{figure}

\subsection{Explanation of the Training Time -- Feature Convergence Trade-off}

The apparent paradox can be explained through the lens of exploration-exploitation balance and computational complexity theory:

\begin{enumerate}
\item \textbf{Informed Search vs. Random Exploration:} CAFE performs a more computationally intensive but informed search of the feature space, targeting areas likely to yield causal relationships. This contrasts with GRFG's broader but less directed exploration, which follows a uniform sampling strategy across the feature space.

\item \textbf{Early Pruning of Unproductive Paths:} By incorporating causal knowledge, CAFE avoids many unproductive paths in the feature generation process. The causal-hierarchical exploration strategy eliminates approximately 60-80\% of potentially spurious feature combinations, resulting in faster convergence despite higher per-step computational costs.

\item \textbf{Front-loaded Computation:} CAFE's computational investment is front-loaded—spending more time per episode but requiring fewer episodes overall—versus GRFG's approach of many faster but less efficient iterations. This amortizes the causal analysis cost across fewer, more productive episodes.

\item \textbf{Hierarchical Reward Shaping:} The causally-shaped reward function provides stronger gradient signals for learning, enabling faster policy convergence in the multi-agent reinforcement learning framework.
\end{enumerate}

\subsection{Mathematical Analysis}

Let $T_{\text{episode}}^{\text{CAFE}}$ and $T_{\text{episode}}^{\text{GRFG}}$ denote the per-episode computation time, and $N_{\text{episodes}}^{\text{CAFE}}$ and $N_{\text{episodes}}^{\text{GRFG}}$ denote the number of episodes to convergence. The total time to convergence is:

\begin{align}
T_{\text{total}}^{\text{CAFE}} &= T_{\text{episode}}^{\text{CAFE}} \times N_{\text{episodes}}^{\text{CAFE}} \\
T_{\text{total}}^{\text{GRFG}} &= T_{\text{episode}}^{\text{GRFG}} \times N_{\text{episodes}}^{\text{GRFG}}
\end{align}

Our empirical results show that despite $T_{\text{episode}}^{\text{CAFE}} \approx 1.3-3.0 \times T_{\text{episode}}^{\text{GRFG}}$, we observe $N_{\text{episodes}}^{\text{CAFE}} \approx 0.3-0.6 \times N_{\text{episodes}}^{\text{GRFG}}$, leading to:

\begin{equation}
\frac{T_{\text{total}}^{\text{CAFE}}}{T_{\text{total}}^{\text{GRFG}}} \approx 0.6-1.2
\end{equation}

This indicates that CAFE achieves comparable or superior total efficiency while providing better feature quality and interpretability.

\section{Complete Operator Library and Safety Guards}
\label{app:operators}

\subsection{Transformation Operators}

Based on the actual implementation, CAFE uses the following operators:

\textbf{Unary Operators (11):}
\begin{enumerate}
    \item $\sqrt{x}$ - Square root
    \item $x^2$ - Square 
    \item $x^3$ - Cube
    \item $\sin(x)$ - Sine function
    \item $\cos(x)$ - Cosine function
    \item $\tanh(x)$ - Hyperbolic tangent
    \item $1/x$ - Reciprocal
    \item $\exp(x)$ - Exponential
    \item $\log(x)$ - Natural logarithm
    \item $\sigma(x) = 1/(1 + e^{-x})$ - Sigmoid function
    \item Preprocessing transformations: StandardScaler, MinMaxScaler, QuantileTransformer
\end{enumerate}

\textbf{Binary Operators (4):}
\begin{enumerate}
    \item $x + y$ - Addition
    \item $x - y$ - Subtraction  
    \item $x \times y$ - Multiplication
    \item $x / y$ - Division
\end{enumerate}

\subsection{Safety Mechanisms}

The implementation relies on NumPy's built-in handling for most edge cases:

\textbf{Input Validation:}
\begin{itemize}
    \item NumPy functions handle domain restrictions (e.g., $\sqrt{x}$ for negative $x$ returns NaN)
    \item Reciprocal operations use NumPy's protected division behavior
    \item Logarithm operations rely on NumPy's domain handling
\end{itemize}

\textbf{Output Validation:}
\begin{itemize}
    \item Generated features are validated during the pruning process
    \item Features with excessive missing values or constant values are filtered
    \item Downstream model evaluation handles any remaining numerical issues
\end{itemize}

\textbf{Operator Selection:}
The operator set is deliberately conservative, using only well-established mathematical functions that are:
\begin{itemize}
    \item Differentiable (important for gradient-based downstream models)
    \item Numerically stable under typical data ranges
    \item Interpretable for causal reasoning
    \item Computationally efficient for large-scale feature generation
\end{itemize}

\subsection{Implementation Notes}
\begin{itemize}
    \item All transformations use vectorized NumPy operations for efficiency
    \item Scaling operations (StandardScaler, MinMaxScaler, QuantileTransformer) are applied via scikit-learn transformers
    \item The relatively small operator set (15 total) ensures computational tractability while providing sufficient expressiveness for feature construction
\end{itemize}

\section{Hyperparameter Sensitivity and Selection}
\label{app:hyperparameters}

\subsection{Grid Search Results}
We perform a comprehensive grid search over key hyperparameters:

\begin{table}[htbp]
\centering
\caption{Hyperparameter sensitivity analysis (mean performance across datasets).}
\label{tab:hyperparameter_sensitivity}
\begin{tabular}{lccccc}
\toprule
\textbf{Parameter} & \textbf{Range Tested} & \textbf{Optimal} & \textbf{Performance Range} & \textbf{Sensitivity} & \textbf{Selection Rule} \\
\midrule
$\lambda$ (NOTEARS) & [0.001, 0.1] & 0.03 & [0.734, 0.773] & Medium & 3-fold CV \\
$\alpha$ (causal bonus) & [0.0, 1.0] & 0.5 & [0.742, 0.773] & Low & Fixed at 0.5 \\
Episodes $E$ & [10, 50] & 30 & [0.751, 0.775] & Low & Early stopping \\
Steps per episode $S$ & [5, 25] & 15 & [0.743, 0.771] & Low & Fixed \\
$k_g$ (group size) & [5, 20] & 10 & [0.758, 0.773] & Very Low & $\min(\sqrt{d}, 15)$ \\
Learning rate & [1e-4, 1e-2] & 1e-3 & [0.745, 0.773] & Medium & Fixed \\
$\varepsilon$ decay steps & [500, 2000] & 1000 & [0.762, 0.773] & Very Low & Fixed \\
\bottomrule
\end{tabular}
\end{table}

\subsection{Adaptive Hyperparameter Rules}
\begin{itemize}
    \item \textbf{NOTEARS regularization:} $\lambda = 0.03$ with dataset-specific CV selection from $\{0.01, 0.03, 0.05\}$
    \item \textbf{Group size:} $k_g = \min(\lfloor\sqrt{d}\rfloor, 15)$ (balances coverage and computational cost)
    \item \textbf{Early stopping:} Stop if validation improvement $< 0.001$ for 3 consecutive episodes
    \item \textbf{Experience replay:} Buffer size scales with complexity: $\min(10000, 100 \times d)$
\end{itemize}

\section{Failure Mode Analysis and Limitations}
\label{app:limitations}

\subsection{Systematic Failure Analysis}

\begin{table}[htbp]
\centering
\caption{Performance under challenging conditions where CAFE shows limitations.}
\label{tab:failure_modes}
\begin{tabular}{lccccl}
\toprule
\textbf{Challenge} & \textbf{CAFE} & \textbf{Best Method} & \textbf{Best Score} & \textbf{Gap} & \textbf{Root Cause} \\
\midrule
High-dim, low-n ($d>n$) & $0.612 \pm 0.045$ & GRFG & $0.635 \pm 0.041$ & $-3.6\%$ & Unreliable CD \\
Sparse graphs & $0.687 \pm 0.031$ & ELLM-FT & $0.694 \pm 0.028$ & $-1.0\%$ & Weak causal signal \\
Non-linearities & $0.723 \pm 0.027$ & AutoFeat & $0.734 \pm 0.025$ & $-1.5\%$ & Linear CD assumption \\
Hidden confounders & $0.641 \pm 0.044$ & GRFG & $0.668 \pm 0.039$ & $-4.0\%$ & Sufficiency violation \\
Temporal dynamics & $0.658 \pm 0.038$ & ELLM-FT & $0.671 \pm 0.035$ & $-1.9\%$ & Static DAG \\
\bottomrule
\end{tabular}
\end{table}

\subsection{Theoretical Limitations}

The framework faces several fundamental limitations:

\begin{enumerate}
    \item \textbf{Causal Discovery Dependence:} CAFE's effectiveness directly depends on causal discovery quality, creating a single point of failure
    \item \textbf{Linear Mechanism Assumption:} NOTEARS-Lasso assumes linear additive noise models, missing complex non-linear relationships
    \item \textbf{Computational Scalability:} $O(d^3)$ complexity in Phase I becomes prohibitive for $d > 200$
    \item \textbf{Sample Size Requirements:} Reliable causal structure learning typically requires $n \gg d$, limiting applicability to high-dimensional, small-sample problems
    \item \textbf{Static Assumption:} Cannot handle time-varying causal structures or feedback loops
\end{enumerate}

\subsection{Mitigation Strategies and Future Directions}

\textbf{Short-term mitigations:}
\begin{itemize}
    \item \textbf{Graceful degradation:} When causal discovery confidence is low, increase weight on mutual information exploration
    \item \textbf{Regularization adaptation:} Use cross-validation to select $\lambda$ rather than fixed scaling rules
    \item \textbf{Early detection:} Monitor causal graph quality metrics to identify when CAFE may underperform
\end{itemize}

\textbf{Long-term extensions:}
\begin{itemize}
    \item \textbf{Non-linear causal discovery:} Integration with neural causal discovery methods (NOTEARS-MLP, DAG-GNN)
    \item \textbf{Robust causal inference:} Methods that handle hidden confounding and model uncertainty
    \item \textbf{Temporal extensions:} Dynamic causal structure learning for time-series data
    \item \textbf{Hybrid approaches:} Combining multiple causal discovery backends with uncertainty quantification
\end{itemize}

\section{Extended Experimental Results}
\label{app:extended_results}

\subsection{Cross-Model Validation}

\begin{table}[htbp]
\centering
\caption{Performance across different downstream models (mean F1/1-RAE over 5-fold CV).}
\label{tab:cross_model}
\begin{tabular}{lccccc}
\toprule
\textbf{Method} & \textbf{XGBoost} & \textbf{Random Forest} & \textbf{Linear} & \textbf{SVM (RBF)} & \textbf{Average} \\
\midrule
Original & $0.682 \pm 0.047$ & $0.671 \pm 0.052$ & $0.598 \pm 0.078$ & $0.648 \pm 0.055$ & $0.650$  \\
GRFG & $0.734 \pm 0.038$ & $0.718 \pm 0.043$ & $0.627 \pm 0.071$ & $0.691 \pm 0.041$ & $0.693$  \\
ELLM-FT & $0.741 \pm 0.035$ & $0.725 \pm 0.041$ & $0.634 \pm 0.068$ & $0.698 \pm 0.038$ & $0.700$  \\
AutoFeat & $0.728 \pm 0.039$ & $0.712 \pm 0.044$ & $0.618 \pm 0.074$ & $0.681 \pm 0.042$ & $0.685$  \\
\textbf{CAFE} & \textbf{0.773 ± 0.032} & \textbf{0.756 ± 0.037} & \textbf{0.662 ± 0.063} & \textbf{0.724 ± 0.035} & \textbf{0.729} \\
\bottomrule
\end{tabular}
\end{table}

Model Configurations:

- XGBoost: Default parameters, 100 estimators \\
- Random Forest: 100 estimators, max\_depth=10 \\
- Linear: Ridge regression with $\alpha=1.0$ (classification: Logistic with L2) \\
- SVM: RBF kernel, C=1.0, $\gamma$='scale'

\subsection{Learning Curves and Convergence Analysis}

\begin{table}[htbp]
\centering
\caption{Convergence characteristics (episodes to reach 95\% of final performance).}
\label{tab:convergence}
\begin{tabular}{lccc}
\toprule
\textbf{Method} & \textbf{Episodes to Converge} & \textbf{Final Performance} & \textbf{Efficiency Ratio} \\
\midrule
GRFG & $28.4 \pm 4.7$ & $0.734 \pm 0.038$ & $1.00$ \\
\textbf{CAFE} & \textbf{18.3 $\pm$ 3.1} & \textbf{0.773 $\pm$ 0.032} & \textbf{1.55} \\
\bottomrule
\end{tabular}
\end{table}

\section{Reproducibility Checklist}
\label{app:reproducibility}

\subsection{Implementation Details}

\textbf{Software Environment:}
\begin{itemize}
    \item Python 3.9.18
    \item PyTorch 1.13.1
    \item scikit-learn 1.2.0
    \item XGBoost 1.7.3
    \item NumPy 1.24.2
    \item SHAP 0.41.0
\end{itemize}

\textbf{Hardware Requirements:}
\begin{itemize}
    \item Minimum: 16 GB RAM, 4-core CPU
    \item Recommended: 64 GB RAM, 16-core CPU
    \item GPU: Not required but can accelerate neural network components
\end{itemize}

\textbf{Random Seeds:}
\begin{itemize}
    \item NumPy: 42
    \item PyTorch: 42
    \item Scikit-learn: 42
    \item Cross-validation splits: Fixed with seed 42
\end{itemize}

\section{Future Research Directions}
\label{app:future}

The integration of causal discovery with reinforcement learning opens several research avenues addressing current limitations. The most immediate priority involves extending beyond linear assumptions through neural causal discovery methods (NOTEARS-MLP, DAG-GNN) and developing robust approaches for hidden confounders with uncertainty quantification. Time-varying causal structures would enable applications in dynamic environments where relationships evolve.
Methodologically, ensemble approaches combining multiple discovery algorithms could address the vulnerability of relying on single methods, while distributed computing could overcome the O(d³) complexity bottleneck. Meta-learning techniques could enable rapid domain adaptation, and active learning could guide experimental design for improved causal structure learning.
Applications span domains requiring robust, interpretable features. Scientific discovery could integrate domain knowledge in physics and biology, healthcare applications could maintain clinical interpretability while handling medical confounding, and financial modeling could leverage regulatory changes as natural experiments. Climate science represents a compelling application where robustness to distribution shift addresses the critical need for models that remain valid under unprecedented environmental conditions.
Future work will address identified limitations, particularly around non-linear causal relationships, temporal dynamics, and scalability to very high-dimensional settings. The framework provides a solid foundation for advancing the integration of causal reasoning with automated machine learning.

\end{document}